\documentclass{article}

\usepackage{microtype}
\usepackage{graphicx}
\usepackage{subcaption}
\usepackage{booktabs} 
\usepackage{multirow} 
\usepackage{enumitem}
\usepackage{hyperref}
\usepackage{url}

\usepackage[accepted]{icml2026}

\usepackage{amsmath}
\usepackage{amssymb}
\usepackage{mathtools}
\usepackage{amsthm}
\usepackage{comment}

\newcommand{\sys}[1]{{STAR-KV#1}}

\usepackage[capitalize,noabbrev]{cleveref}

\theoremstyle{plain}
\newtheorem{theorem}{Theorem}[section]

\newtheorem{lemma}[theorem]{Lemma}

\theoremstyle{definition}

\theoremstyle{remark}

\usepackage[textsize=tiny]{todonotes}

\icmltitlerunning{\sys: Low-Rank KV Cache Compression via  Soft Thresholding for Adaptive Rank Control}

\begin{document}

\twocolumn[
  \icmltitle{\sys: Low-Rank KV Cache Compression \\ via  Soft Thresholding for Adaptive Rank Control}







  \icmlsetsymbol{equal}{*}

\begin{icmlauthorlist}
  \icmlauthor{Priyansh Bhatnagar}{equal,ucsd}
  \icmlauthor{Ashkan Moradifirouzabadi}{equal,ucsd}
  \icmlauthor{Se-Hyun Yang}{dnotitia}
  \icmlauthor{SeungJae Lee}{dnotitia}
  \icmlauthor{Jungwook Choi}{hanyang}
  \icmlauthor{Mingu Kang}{ucsd}
\end{icmlauthorlist}

\icmlaffiliation{ucsd}{University of California San Diego, San Diego, CA, USA}
\icmlaffiliation{dnotitia}{Dnotitia Inc., Seoul, Republic of Korea}
\icmlaffiliation{hanyang}{Hanyang University, Seoul, Republic of Korea}

\icmlcorrespondingauthor{Mingu Kang}{mingu@ucsd.edu}

  \icmlkeywords{Machine Learning, ICML}

  \vskip 0.3in
]
\printAffiliationsAndNotice{\icmlEqualContribution}



\begin{abstract}
Low-rank projection has emerged as a promising approach for compressing the KV cache by exploiting hidden-dimension redundancy. However, prior methods rely on fixed or heuristic rank selection and struggle to achieve aggressive compression with minimal accuracy degradation.
We propose \sys, an adaptive low-rank KV cache compression framework with fine-grained rank control. \sys~encompasses 1) a differentiable thresholding mechanism that enables optimal rank selection at both attention-head and block levels, 2) a hybrid decomposition strategy that applies different low-rank factorizations according to the sensitivity of key and value projections, and 3) a low-rank--aware mixed precision quantization  that leverages data statistics for near lossless low-bit quantization. Evaluated across multiple LLMs and benchmarks, \sys~achieves up to 75\%  KV cache compression and up to 20$\times$ overall KV cache reduction when combined with quantization.
Enabled by custom Triton-based GPU kernels, \sys~delivers up to 6.9$\times$ speedup for the attention module and 3.1$\times$ end-to-end generation throughput. Our code is publicly available at: \url{https://github.com/PriyanshBhatnagar/STAR-KV}.

\end{abstract}

\section{Introduction}

Recent advances in large language models (LLMs) are driven by rapid scaling in both model size and supported context length. Conventional models support context windows on the order of $10^5$ tokens, such as up to 128K~\cite{meta2024llama4,grattafiori2024llama3herdmodels}, while state-of-the-art work demonstrates architectures capable of handling context lengths of one million tokens or more~\cite{liu2025world, ding2024longrope}. 
However, supporting longer contexts introduces severe system-level bottlenecks during inference~\cite{kwon2023efficient, hooper2024kvquant10millioncontext}. A primary source of these bottlenecks is the key–value (KV) cache, whose storage and access costs increase as generation proceeds. The KV cache stores key and value tensors associated with previously processed tokens to avoid recomputation, resulting in a memory footprint that grows linearly with context length and dominates GPU memory consumption and bandwidth, often exceeding the storage required for model weights~\cite{dao2023flashattention, gholami2024ai}. For example, when fully utilizing the 128K token context window in LLaMA-3.1-8B~\cite{grattafiori2024llama3herdmodels} with a batch size of four, the KV cache accounts for 81\% of the model’s total 85GB FP16 memory footprint, becoming a central obstacle to long-context inference.

Recently, KV cache compression using \emph{low-rank decomposition} has emerged as a promising direction that targets redundancy in the hidden (channel) dimension of key and value tensors.
Low-rank decomposition faces two fundamental challenges. First, aggressive compression introduces \textit{information loss} whose impact varies substantially across layers and attention heads, making it difficult to maintain model quality at high compression rates. As a result, prior work~\cite{chang2025palu, yan2025recalkvlowrankkvcache, saxena2024eigenattentionattentionlowrank} typically limits compression to moderate levels (e.g., 30--50\%), since pushing beyond this regime leads to noticeable quality degradation. Second, although compression reduces memory footprint, reconstructing compressed key and value tensors at inference introduces \textit{additional computation}, which significantly increases inference latency~\cite{chang2025palu, yan2025recalkvlowrankkvcache, saxena2024eigenattentionattentionlowrank}.

To address these challenges, we introduce \sys~(\underline{S}oft \underline{T}hresholding for \underline{A}daptive \underline{R}ank \underline{K}\underline{V} Cache Compression), 
 a framework for KV cache compression via fine-grained rank control.
To enhance the accuracy--compression trade-off, \sys~proposes an \emph{adaptive rank learning} framework that reduces the latent dimensionality of low-rank KV representations at fine granularities. 
Specifically, we parameterize the rank selection of each low-rank KV representation using a differentiable threshold applied to the singular values.
By doing so, \sys~yields 
rank profiles that are learned, in contrast to prior approaches that set ranks via heuristics or sensitivity estimations~\cite{saxena2024eigenattentionattentionlowrank, chang2025palu, zhang2024lorc}.

Furthermore, low-rank decomposition can be applied at different granularities: (i) \emph{head-wise decomposition} (HD), which factorizes each head’s projection independently, and (ii) \emph{joint decomposition} (JD), which factorizes a single matrix formed by concatenating heads. 
These two decomposition granularities exhibit trade-offs in terms of accuracy and reconstruction overhead. We characterize this trade-off along with a sensitivity analysis and adopt a \emph{hybrid decomposition} scheme in \sys: (i) HD  for the key projection to minimize reconstruction overhead, and (ii) JD 
for the value projection to better preserve fidelity.

\vspace{-2pt}
Finally, \sys~adopts a \emph{mixed-precision low-rank--aware quantization} that synergistically leverages the inherently ordered singular values of the key/value projection matrices to demarcate important channels from the rest. This enables efficient identification and handling of outlier channels in contrast to prior KV quantization methods that rely on explicit outlier modeling, non-uniform quantization, or additional computations to address outliers at high cost~\cite{hooper2024kvquant10millioncontext, liu2024kivi, su2025accurate, ashkboos2024quarotoutlierfree4bitinference, tseng2024quip}.

\begin{figure}[t]
  \centering
  \includegraphics[width=0.99\columnwidth]{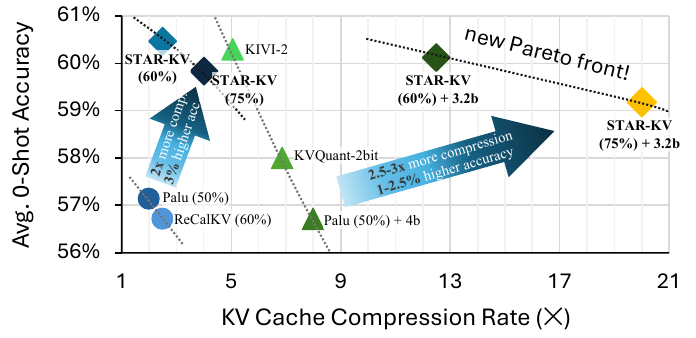}
  \caption{The average zero-shot accuracy of LongChat-v1.5-7B on five tasks (ARC-e, ARC-c, OBQA, PIQA, Hella) across different KV cache compression rates applied by SoTA methods.}
  \label{fig:positioning}
\end{figure}
\vspace{-2pt}

Evaluated on a wide range of LLMs and tasks, \sys~yields improved \emph{accuracy--compression} Pareto trade-offs relative to prior KV cache compression approaches, as shown in \hyperref[fig:positioning]{Fig.~\ref*{fig:positioning}}, achieving up to \textbf{4.0$\times$ low-rank KV cache compression} with a negligible accuracy degradation compared to the uncompressed model, and 2.70\% better average zero-shot accuracy and 2.0$\times$ higher compression rate compared to a SoTA method, Palu~\cite{chang2025palu}. When combined with 3.2-bit average mixed-precision quantization, \sys~attains up to \textbf{20.0$\times$ overall KV cache compression} while achieving a higher zero-shot accuracy by 1.18\% and 2.47\% and compression by 2.5$\times$ and 2.9$\times$ over KVQuant~\cite{hooper2024kvquant10millioncontext} and Palu with 4-bit quantization, respectively. 
With custom Triton-based GPU kernels, \sys~(75\%) achieves up to \textbf{6.9$\times$ and 4.0$\times$ speedup in the attention operator} compared to PyTorch SDPA implementation~\cite{shah2024flashattention}, with and without 4-bit quantization, respectively. \sys~also yields \textbf{3.1$\times$ higher end-to-end generation throughput }(tokens/second) in long context (128K) or high batch size scenarios compared to PyTorch native implementation.
In summary, our contributions are: 
\begin{itemize}[noitemsep, topsep=3pt]
    \item A \textit{learnable and adaptive} low-rank KV cache compression method that automatically determines optimal ranks at  decoder block and attention head granularity.
    \item A \textit{hybrid decomposition} scheme that exploits the differing sensitivity of key and value tensors under compression with minimal reconstruction overhead.
    \item A \textit{low-rank–aware mixed-precision quantization} strategy that leverages the data statistics induced by low-rank compression.
    \item Customized \textit{Triton-based GPU kernels} that translate the proposed techniques into real speedups.
\end{itemize}

\noindent\textbf{Conflict of Interest Disclosure.} 
Se-Hyun Yang and SeungJae Lee are employed by Dnotitia Inc., which develops technologies related to this work. Mingu Kang has a financial interest in Dnotitia Inc.
\section{Related Work}
\noindent\textbf{KV Cache Optimization.}
Recent attention variants, such as grouped-query attention (GQA)~\cite{ainslie2023gqatraininggeneralizedmultiquery} and multi-head latent attention (MLA)~\cite{liu2024deepseek}, reduce KV cache memory by sharing or projecting key/value heads. However, these approaches require architectural changes and additional (re-)training, limiting their applicability to existing models. In contrast, lightweight post-training alternatives include KV cache quantization, which reduces key/value precision~\cite{hooper2024kvquant10millioncontext, liu2024kivi}, and token eviction, which retains only a subset of tokens under a cache budget~\cite{zhang2023h2oheavyhitteroracleefficient, xiao2024efficientstreaminglanguagemodels}.

\noindent\textbf{Low-Rank KV Cache.}
Low-rank KV cache methods exploit redundancy in the hidden (channel) dimension of key/value tensors by projecting them into a lower-dimensional latent space~\cite{chang2025palu, saxena2024eigenattentionattentionlowrank, lin2024matryoshkakv, yan2025recalkvlowrankkvcache}. Unlike quantization or token eviction, these methods cache compact latent representations.
A common approach is \emph{low-rank decomposition}, which factorizes key/value projection weights offline during model preparation~\cite{chang2025palu}. The resulting low-rank factors are used directly at inference time, incurring no additional runtime overhead.
\section{Background}
\subsection{Multi-Head Attention (MHA)} 
Attention mechanism~\cite{vaswani2017attention} selectively focuses on relevant sections of the input sequence using three separate spaces: query, key, and value.
Given an input $x$, these spaces are computed as, $Q = x \cdot W_Q^{T},$ $K = x \cdot W_K^{T},$ $V = x \cdot W_V^{T},$ where $W_Q$, $W_K$, and $W_V$ denote the projection matrices.
%
%
For head $i$, the attention weights are computed as
\(
P^i = \text{Softmax}\left(\frac{Q^i \cdot {K^i}^{T}}{\sqrt{d_h}}\right),
\)
where $d_h$ is the per-head dimensionality.
The output of multi-head attention is obtained by computing the weighted sum for each head, concatenating the results, and projecting back to the original hidden dimension:
\(
\mathrm{MHA}(x)
= \big( P^{i} \cdot V^{i} \big)_{\text{concat}_i} \cdot W_{out}^{T},
\)
where $W_{out}$ is the output projection matrix.

\subsection{Low-rank KV Cache}\label{sec:background-low_rank_kv}

Singular Value Decomposition (SVD) factorizes a matrix $W\in\mathbb{R}^{m\times n}$ as $W=U\Sigma V^{\top}$, where $U\in\mathbb{R}^{m\times m}$ and $V\in\mathbb{R}^{n\times n}$ are orthogonal and $\Sigma\in\mathbb{R}^{m\times n}$ is diagonal with singular values in descending order; a rank-$r$ approximation keeps the top-$r$ components, yielding $W\approx U_r\Sigma_r V_r^{\top}$ with $U_r\in\mathbb{R}^{m\times r}$, $\Sigma_r\in\mathbb{R}^{r\times r}$, and $V_r\in\mathbb{R}^{n\times r}$.
The key and value projection matrices
($W_K, W_V \in \mathbb{R}^{d_{out} \times d_{in}}$)
are factorized using SVD for low-rank KV cache compression, and only the top-$r$ singular values and corresponding singular vectors are retained for a rank-$r$ approximation.
During inference, the truncated decompositions of the key and value projection matrices are expressed in factorized form, where the right singular vectors and singular values are merged into a single matrix as
$W_K \approx U_K (\Sigma V)_K^T$ and
$W_V \approx U_V (\Sigma V)_V^T$,
where $U_K, U_V \in \mathbb{R}^{d_{out} \times r}$ and
$(\Sigma V)_K^T, (\Sigma V)_V^T \in \mathbb{R}^{r \times d_{in}}$ denote the low-rank factors of the key and value projections, respectively, yielding the corresponding low-rank key representation as
$K = (x \cdot (\Sigma V)_K) \cdot U_K^T = K' \cdot U_K^T$,
and value representation as
$V = (x \cdot (\Sigma V)_V) \cdot U_V^T = V' \cdot U_V^T$.
In this formulation, the intermediate key ($K'$) and value ($V'$) representations lie in a reduced $r$-dimensional subspace and are stored in the KV cache instead of the full-dimensional tensors, resulting in a reduced memory footprint.

\subsection{Weight Absorption}
\label{sec:reordering}
A key technique for reducing the computational complexity in low-rank KV caching is \emph{weight absorption}, introduced in DeepSeek-V2~\cite{liu2024deepseek} in the context of MLA. The core idea is to absorb the reconstruction matrices (e.g., $U_K$ and $U_V$) into the query and output projections (e.g., $W_Q$ and $W_{out}$) by reordering associative matrix multiplications, so that reconstruction cost is reduced at decoding time.
However, recent LLMs apply Rotary Positional Embedding (RoPE~\cite{10.1016/j.neucom.2023.127063}) to the query and key states prior to computing attention scores.
The position-dependent RoPE matrix is inserted between $W_Q$ and $U_K$, preventing offline fusion~\cite{li2025commvq, ji2025towards}.
In contrast, value processing does not involve positional encoding and therefore allows operation reordering. The attention output computation for head $i$, $(p^i \cdot V^i)\cdot {W_{out}^i}^T$, is reordered as:
\(
 \bigl(p^i \cdot V' \cdot {U_V^i}^T\bigr)\cdot {W_{out}^i}^T = (p^i \cdot V') \cdot \bigl({U_V^i}^T {W_{out}^i}^T\bigr),
\)
where $p^i \in \mathbb{R}^{l}$ are the attention probabilities,
$V' \in \mathbb{R}^{l \times r_v}$ are the compressed value states,
$U_V^i \in \mathbb{R}^{d_h \times r_v}$ is the value reconstruction matrix, and
$W_{out}^i \in \mathbb{R}^{d \times d_h}$ is the head-wise output projection. 
Before weight absorption, the total FLOPs of the value processing across $h$ heads is $l \cdot r_v \cdot h^2 \cdot d_h + l \cdot h \cdot d_h$. By defining the precomputed  $W_{out}'^i \triangleq W_{out}^i U_V^i$, the reordering of operations reduces the FLOPs count to $ l \cdot r_v \cdot h \cdot d_h$ + $h \cdot r_v \cdot d_h$. With a sufficiently large sequence length $l$, the computational saving approaches a factor of $d_h$, which is typically 128 in current widely used LLMs.




\section{The \sys~Framework}

\subsection{Rank-Adaptive KV Cache Compression}
\label{sec:rank_adaptive}

\subsubsection{ Rank Selection Challenge for  KV Cache} 

\begin{figure}[]
\centering
\includegraphics[width=0.7\columnwidth]{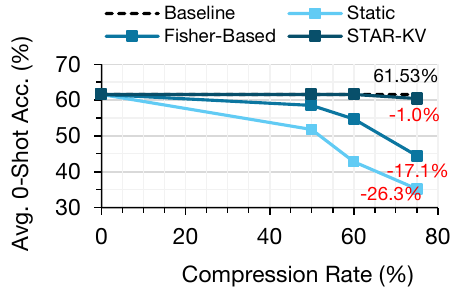}
\caption{
Comparison of  rank selection strategies at varying KV cache compression rates. Average zero-shot accuracy is reported for LongChat-v1.5-7B   evaluated on common sense tasks.}
\label{fig:comparison_rank_strategies}
\end{figure}

Low-rank KV cache compression critically depends on \emph{rank allocation} across decoder blocks, as latent dimensionality directly controls the trade-off between approximation error and memory reduction. Aggressive rank reduction degrades accuracy, whereas conservative choices limit KV cache savings. Selecting ranks that balance this trade-off under a global compression budget is therefore a central challenge.
Rank allocation largely follows two paradigms. (1) \textit{Static rank assignment} uses a uniform rank (or fixed compression ratio) across blocks and often across $W_K$ and $W_V$, ignoring heterogeneous block-wise and operator sensitivity. (2) \textit{Heuristic-based allocation} assigns ranks using offline importance proxies (e.g., Fisher-information- scores~\cite{ly2017tutorial})~\cite{chang2025palu}. While these heuristics improve over uniform ranks at moderate compression, they exhibit non-negligible accuracy loss at high compression budgets. \hyperref[fig:comparison_rank_strategies]{Fig.~\ref*{fig:comparison_rank_strategies}} shows degradation increases with compression rate, reaching 26.33\% for the static and 17.14\% for the heuristic-based approach at 75\% compression. This motivates an adaptive rank-selection strategy that supports aggressive KV cache compression while minimizing performance loss.

\subsubsection{Adaptive Rank Selection via Soft Thresholding}
\label{sec:adaptive_rank}
To address the rank selection challenge, \sys~introduces an adaptive rank selection method which applies a threshold to singular values of KV projection matrices. 

\begin{figure}[]
\centering
\includegraphics[width=0.9\columnwidth]{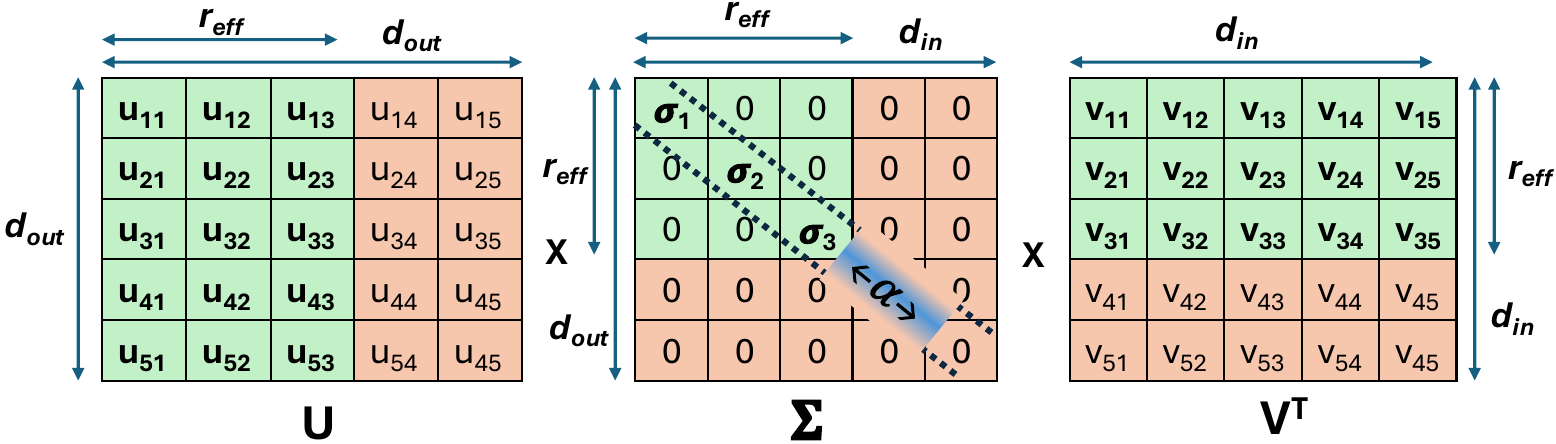}
\caption{
A projection matrix is decomposed as $W = U \Sigma V^T$.
Applying a threshold $\alpha$ on the diagonal of $\Sigma$
suppresses singular values with $\sigma_i < \alpha$ to zero and retains only the corresponding columns of $U$ and rows of ${V_T}$, yielding an effective rank $r_{\text{eff}}$.}
\label{fig:decomposed_layer}
\end{figure}

\noindent \textbf{Thresholding Singular Values.}
We parameterize rank selection via a threshold $\alpha$ applied to the singular spectrum, which directly controls the \emph{effective rank} ($r_\text{eff}$) of each decomposed projection. 
Given the SVD, $W = U \Sigma V^{\top}$ with $\Sigma=\mathrm{diag}(\sigma_1,\ldots,\sigma_p)$ and $\sigma_1\ge\cdots\ge\sigma_p\ge 0$, we define the thresholded spectrum
\begin{equation}
\widetilde{\Sigma}_\alpha\triangleq \mathrm{diag}(\tilde{\sigma}_1,\ldots,\tilde{\sigma}_p),
\qquad
\tilde{\sigma}_i \triangleq \sigma_i\,\mathbf{1}\{\sigma_i \ge \alpha\}.
\end{equation}
As shown in \hyperref[fig:decomposed_layer]{Fig.~\ref*{fig:decomposed_layer}}, increasing $\alpha$ moves the cutoff upward along the diagonal of $W_\Sigma$, decreasing the number of retained singular values and reducing the effective rank $r_{\mathrm{eff}}$, while decreasing $\alpha$ retains more components  increasing $r_{\mathrm{eff}}$.  

\noindent \textbf{Soft Threshold Mechanism.}
To capture the varying rank sensitivity across decoder blocks, the threshold must be learned independently for the key and value projections in each block. In singular value thresholding, optimizing the threshold parameter $\alpha$ implicitly controls the effective rank $r_{\mathrm{eff}}$; however, hard thresholding is non-differentiable (\hyperref[fig:soft_thres_a]{Fig.~\ref*{fig:soft_thres_a}}) and therefore unsuitable for gradient-based optimization. \cite{leopard:isca:2022} proposes a differentiable thresholding mechanism in the context of token pruning by replacing the hard threshold with a surrogate function.
Inspired by it, we apply a differentiable thresholding operator (\hyperref[fig:soft_thres_b]{Fig.~\ref*{fig:soft_thres_b}}) on the singular values. Let $x$ denote a singular value (a diagonal entry of $W_\Sigma$). We define the soft-threshold operator
\begin{equation}
\mathcal{T}h_{s}(x;\alpha) =
\begin{cases} 
      x \,\tanh\!\big(s (x - \alpha)\big), & x \ge \alpha, \\
      c \,\tanh\!\big(s (x - \alpha)\big), & x < \alpha,
\end{cases}
\end{equation}
where $\alpha$ is a learnable threshold, $s>0$ controls transition sharpness, and $c$ is a constant (we set $c=0$  so sub-threshold values are suppressed).
This operator is a smooth surrogate for spectral truncation: for $x \gg \alpha$, $\tanh(s(x-\alpha)) \rightarrow 1$ and the singular value is approximately preserved; for $x < \alpha$, the output is driven toward zero. Larger $s$ yields a closer approximation to a hard cutoff, while smaller $s$ produces a smoother transition.
Applying $\mathcal{T}h_{s}(\cdot;\alpha)$ to the singular spectra of the key/value projections yields
\(
W_{K_{\Sigma_{r_{\mathrm{eff}}}}} = \mathcal{T}h_{s}(W_{K_\Sigma};\alpha_K)
\)
and
\(
\quad
W_{V_{\Sigma_{r_{\mathrm{eff}}}}} = \mathcal{T}h_{s}(W_{V_\Sigma};\alpha_V),
\)
where $\alpha_K$ and $\alpha_V$ are learned independently (per block and attention head). As discussed in \hyperref[sec:setting]{Section~\ref{sec:setting}}, this training process is done on a small subset of data, taking only $<$6 GPU hours.
Once training is completed, we employ hard thresholding offline prior to inference and replace the decomposed projection matrices with their truncated form during inference.

\begin{figure}[t]
    \centering

    \begin{subfigure}[t]{0.3\linewidth}
        \centering
        \includegraphics[scale=0.4]{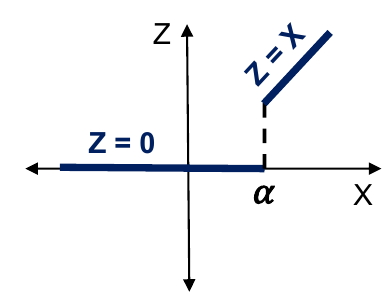}
        \caption{Hard Threshold}
        \label{fig:soft_thres_a}
    \end{subfigure}
    \begin{subfigure}[t]{0.49\linewidth}
        \centering
        \includegraphics[scale=0.38]{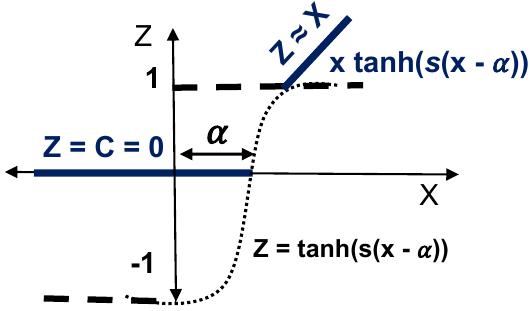}
        \caption{Soft Threshold}
        \label{fig:soft_thres_b}
    \end{subfigure}

    \caption{Thresholding operators: (a) conventional non-differentiable hard threshold, and (b) soft threshold using shifted tanh with sharpness control factor $s$.}
    \label{fig:soft_thres}
\end{figure}


\subsection{Adaptive Compression Objective}
\label{sec:objective}


To maximize low-rank KV cache compression while preserving model performance, we optimize a joint objective that balances compression and accuracy through an adaptive compression loss and a knowledge distillation loss.

\noindent \textbf{Compression Loss.}
Each decoder block contains learnable soft-threshold parameters that control compression for the key and value projections. Let $\alpha_i$ denote the threshold parameter associated with decoder block $i$. We introduce an adaptive compression loss
\begin{equation}
\mathcal{L}_{\text{acmp}} = \sum_i e^{-\alpha_i}.\label{eq:loss}
\end{equation}
This formulation naturally yields phase-adaptive compression behavior. When $\alpha_i$ is small, $e^{-\alpha_i}$ is large, producing a strong optimization signal that quickly increases $\alpha_i$ and thus boosts compression in the early stage of training. As $\alpha_i$ grows, the exponential term decays rapidly, leading to diminishing compression pressure. This slows the growth of $\alpha_i$ and allows the model to focus on recovering and stabilizing task performance at the achieved compression level. 

\noindent\textbf{Knowledge Distillation Loss.} 
We compute the distillation loss using the Kullback--Leibler divergence:
\begin{equation}
\mathcal{L}_{\text{KD}} = \mathrm{KL}\big( \mathbf{p}_t \;\|\; \mathbf{p}_s \big),
\end{equation}
where $\mathbf{p}_t$ and $\mathbf{p}_s$ denote the output token distributions of the teacher and student models, respectively. This encourages the compressed student model to match the teacher’s predictive distribution, preserving model behavior under aggressive compression.
The final training objective is therefore:
\begin{equation}
\label{total_loss}
\mathcal{L}_{\text{tot}} = 
\mathcal{L}_{\text{KD}} 
+ \gamma \cdot \mathcal{L}_{\text{acmp}},
\end{equation}
where $\gamma$  scales the contributions of the compression loss, thereby modulating their relative influence on the objective.

\subsection{Decomposition Scheme}
\label{sec:hybrid_decomp}

\begin{figure}[]
  \centering
  \includegraphics[width=\columnwidth]{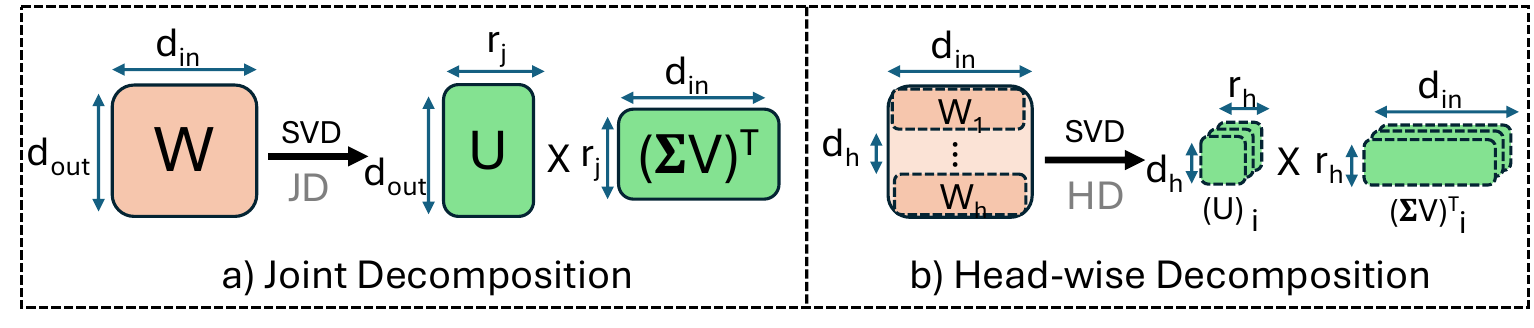}
  \caption{Comparison of JD and HD.
  }
  \label{fig:comparison_jdhd}
\end{figure}

\hyperref[fig:comparison_jdhd]{Fig.~\ref*{fig:comparison_jdhd}} shows low-rank decomposition can be applied at different granularities: (i) \emph{joint decomposition} (JD), which factorizes a single matrix formed by concatenating heads, and (ii) \emph{head-wise decomposition} (HD), which factorizes each head’s projection independently.  We analyze the trade-off in terms of accuracy and reconstruction overhead to propose a decomposition scheme for \sys~in this section.

\subsubsection{Reconstruction Overhead Analysis}

In JD, all heads are concatenated and reconstructed jointly using a shared cache $A\in\mathbb{R}^{l\times r_j}$ and reconstruction matrix $B\in\mathbb{R}^{r_j\times d}$, where $r_j=c\,d$, $d=h\,d_h$, $c$ is the compression rate, and $d_h$ is the per-head dimension. The reconstruction cost is
\(
 l\cdot r_j\cdot d
= l\cdot c\cdot h^2\cdot d_h^2.
\)
In contrast, HD reconstructs each head independently with per-head cache $A_h\in\mathbb{R}^{l\times r_h}$ and reconstruction matrix $B_h\in\mathbb{R}^{r_h\times d_h}$, where $r_h=c\,d_h$. Summing across $h$ heads gives
\(
 h\cdot l\cdot r_h\cdot d_h
= l\cdot c\cdot h\cdot d_h^2,
\)
which is a factor of $h$ lower than JD for the same compression rate.

\noindent \textbf{Observation 1.}
The total reconstruction FLOPs in JD is $h\times$ larger than HD at the same compression rate $c$. 


\subsubsection{Reconstruction Error Analysis}

\begin{figure}[t]
  \centering
  \includegraphics[width=\columnwidth]{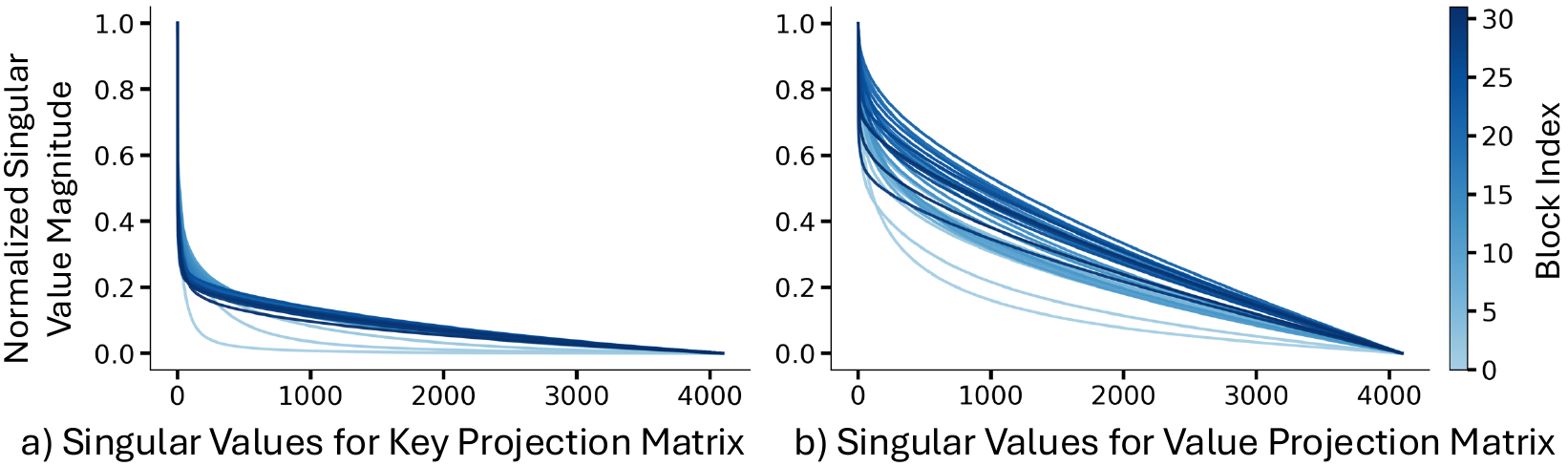}
  \caption{The singular-value spectra of the key and value projection  matrices in LLaMA-2-7B, normalized by their spectral norms ($\|\cdot\|_2$), across 32 decoder blocks.}
  \label{fig:hybrid_accuracy}
\end{figure}

In this section, we analyze the sensitivity of key and value projections by examining the singular value distributions of their corresponding projection matrices ($W_K$, $W_V$). We also analyze how different low-rank decomposition schemes affect the approximation error.

\begin{lemma}
\label{lemma1}
Let $W\in\mathbb{R}^{m\times n}$ have singular values $\sigma_1\ge\cdots\ge\sigma_p$ with $p=\min(m,n)$, and let $W_r$ be its rank-$r$ truncated SVD. By Eckart-Young-Mirsky theorem, the relative error of the optimal rank-$r$ approximation in the spectral norm satisfies
\begin{equation}
\varepsilon_2(W;r)\;\triangleq\;\min_{\mathrm{rank}(X)\le r}\frac{\|W-X\|_2}{\|W\|_2}
\;=\;\frac{\sigma_{r+1}(W)}{\sigma_1(W)}.
\label{eq:eps2_def}
\end{equation}
where $\|\cdot\|_2$ denotes the spectral norm ($\|W\|_2=\sigma_1$).
\end{lemma}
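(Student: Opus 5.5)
The plan is to reduce the statement to the spectral-norm case of the Eckart--Young--Mirsky theorem, by proving the two matching inequalities directly. The normalization by $\|W\|_2=\sigma_1(W)$ is a fixed constant independent of $X$, so it suffices to show $\min_{\mathrm{rank}(X)\le r}\|W-X\|_2=\sigma_{r+1}(W)$ and then divide. This assumes $W\neq 0$ and $r<p$, with the convention $\sigma_{p+1}=0$ for $r\ge p$, where the error is trivially zero. Throughout I would write the SVD $W=\sum_{i=1}^p \sigma_i u_i v_i^\top$ as in Section~\ref{sec:background-low_rank_kv}.

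\textbf{Achievability.} For the truncated SVD $W_r=\sum_{i\le r}\sigma_i u_i v_i^\top$, the residual is $W-W_r=\sum_{i>r}\sigma_i u_i v_i^\top$. This is itself an SVD with orthonormal factors, so its largest singular value is $\sigma_{r+1}$. Hence $\|W-W_r\|_2=\sigma_{r+1}$, and the minimum is at most $\sigma_{r+1}$.

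\textbf{Lower bound.} This is the only nontrivial step, and I would handle it with a dimension-counting argument. Take any $X$ with $\mathrm{rank}(X)\le r$. Then $\ker X\subseteq\mathbb{R}^n$ has dimension at least $n-r$. The subspace $S=\mathrm{span}\{v_1,\dots,v_{r+1}\}$ has dimension $r+1$. Since $(n-r)+(r+1)>n$, the two subspaces meet nontrivially, so there is a unit vector $z\in\ker X\cap S$. Write $z=\sum_{i\le r+1}c_i v_i$ with $\sum c_i^2=1$. Then
\[
\|(W-X)z\|_2=\|Wz\|_2=\Big(\sum_{i\le r+1}\sigma_i^2c_i^2\Big)^{1/2}\ge\sigma_{r+1}.
\]
The equality in the middle uses the orthonormality of the $u_i$. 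It follows that $\|W-X\|_2\ge\sigma_{r+1}$.

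Combining the two bounds gives equality. Dividing by $\sigma_1(W)=\|W\|_2$ then yields \eqref{eq:eps2_def}.
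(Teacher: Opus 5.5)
Your proof is correct and follows the same route as the paper: the paper invokes the Eckart--Young--Mirsky theorem for $\min_{\mathrm{rank}(X)\le r}\|W-X\|_2=\sigma_{r+1}(W)$ and divides by the constant $\|W\|_2=\sigma_1(W)$. The difference is that the paper cites the theorem without proof, whereas you give the standard self-contained argument for its spectral-norm case (truncated SVD for achievability, and the intersection of $\ker X$ with $\mathrm{span}\{v_1,\dots,v_{r+1}\}$ for the lower bound), and you also state the assumptions $W\neq 0$ and $r<p$ (or the convention $\sigma_{p+1}:=0$) that the paper leaves implicit.
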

\noindent \textbf{Implication.} 
In \hyperref[fig:hybrid_accuracy]{Fig.~\ref*{fig:hybrid_accuracy}}, we plot the normalized spectra $\sigma_i/\sigma_1$ for $W_K$ and $W_V$. 
At the truncation index $r\!+\!1$, we observe that the value projection typically has a larger normalized singular value than the key projection, i.e.,
\begin{equation}
\frac{\sigma_{r+1}(W_K)}{\sigma_1(W_K)}
\;\le\;
\frac{\sigma_{r+1}(W_V)}{\sigma_1(W_V)}.
\label{eq:kv_spec_compare}
\end{equation}
\noindent \textbf{Observation 2.} \eqref{eq:eps2_def} and \eqref{eq:kv_spec_compare} imply $\varepsilon_2(W_K;r)\le \varepsilon_2(W_V;r)$, which indicates that under the same rank constraint, the optimal rank-$r$ approximation of values incurs higher (or equal) relative error than that of keys.

\begin{lemma}
\label{lemma2}
Let $W \in \mathbb{R}^{m \times d}$ be partitioned into $h$ heads along columns as
$W = \big[ W^{(1)} \;\; W^{(2)} \;\; \cdots \;\; W^{(h)} \big]$ with
$W^{(i)} \in \mathbb{R}^{m \times d_h}$ for $i \in \{1,\dots,h\}$ and $d = h d_h$.
The best joint rank-$R$ approximation is
\begin{equation}
\widehat W_{\mathrm{joint}} \in \arg\min_{\mathrm{rank}(X)\le R}\ \|W - X\|_F ,
\end{equation}
while the best head-wise approximation with rank at most $r$ per head is
\begin{equation}
\widehat W_{\mathrm{hw}} \in \arg\min_{\{X^{(i)}:\,\mathrm{rank}(X^{(i)})\le r\}}
\left\|W - \big[ X^{(1)}\ \cdots\ X^{(h)} \big]\right\|_F .
\end{equation}
Under the same compression budget $R = hr$, the joint reconstruction error is no larger than the head-wise error:
\begin{equation}
\|W - \widehat W_{\mathrm{joint}}\|_F \;\le\; \|W - \widehat W_{\mathrm{hw}}\|_F .
\end{equation}
\end{lemma}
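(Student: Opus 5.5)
The plan is a feasibility (set-inclusion) argument: show that the head-wise optimum is itself an admissible competitor in the joint problem. Then the joint minimum, taken over a larger feasible set, can be no larger.

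\textbf{Step 1.} Let $\widehat W_{\mathrm{hw}} = [X^{(1)}\ \cdots\ X^{(h)}]$ with $\mathrm{rank}(X^{(i)})\le r$ for each $i$. We bound the rank of the concatenation by subadditivity over column blocks. Write
\[
[X^{(1)}\ \cdots\ X^{(h)}] = \sum_{i=1}^h X^{(i)} E_i^{\top},
\]
where $E_i\in\mathbb{R}^{d\times d_h}$ is the column selector placing block $i$. Each summand has rank at most $\mathrm{rank}(X^{(i)})\le r$. Hence
\[
\mathrm{rank}(\widehat W_{\mathrm{hw}})\le hr = R.
\]
Equivalently, the column space of the concatenation is the sum of the column spaces of the blocks, so its dimension is at most $\sum_i \mathrm{rank}(X^{(i)})$.

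\textbf{Step 2.} By Step 1, $\widehat W_{\mathrm{hw}}$ lies in the feasible set $\{X : \mathrm{rank}(X)\le R\}$ of the joint problem. Therefore
\[
\|W-\widehat W_{\mathrm{joint}}\|_F = \min_{\mathrm{rank}(X)\le R}\|W-X\|_F \le \|W-\widehat W_{\mathrm{hw}}\|_F .
\]
This is the claim.

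\textbf{Step 3.} We confirm that both minima are attained, so the $\arg\min$ in the statement is nonempty. This follows from the Eckart--Young--Mirsky theorem invoked in Lemma~\ref{lemma1}, in its Frobenius version: the joint optimum is the rank-$R$ truncated SVD. The head-wise problem separates blockwise, because
\[
\|W-[X^{(1)}\cdots X^{(h)}]\|_F^2 = \sum_i \|W^{(i)}-X^{(i)}\|_F^2 ,
\]
so its optimum is the per-head truncated SVDs. As a by-product this yields explicit error formulas. The joint error is $\big(\sum_{j>R}\sigma_j(W)^2\big)^{1/2}$, and the head-wise error is $\big(\sum_i\sum_{j>r}\sigma_j(W^{(i)})^2\big)^{1/2}$.

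\textbf{Main obstacle.} There is no substantive obstacle; the only care point is the rank subadditivity in Step 1. If $R$ exceeds $\min(m,d)$ the statement is trivial, since the joint error is then zero. I would also remark that the inequality can be strict. The comparison concerns only approximation error at equal total latent rank; it does not claim equal parameter count in the factors, and this distinction matters for the reconstruction overhead discussed in Observation~1.
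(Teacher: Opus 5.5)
Your proposal is correct and matches the paper's proof: the paper also uses rank subadditivity, $\mathrm{rank}([X^{(1)}\ \cdots\ X^{(h)}])\le\sum_i\mathrm{rank}(X^{(i)})\le hr=R$, to show that every head-wise candidate is feasible for the joint problem, so the joint minimum cannot exceed the head-wise error. Your Step 3 is a correct addition that the paper omits. It shows that the minimizers presupposed by the statement exist, via Eckart--Young--Mirsky and the blockwise separation of the Frobenius norm, and it gives explicit error formulas; the inequality itself does not require it.
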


\textit{Proof} for \hyperref[lemma2]{Lemma~\ref*{lemma2}} is in \hyperref[proof_lemma2]{Appendix~\ref*{proof_lemma2}}.

\noindent \textbf{Observation 3.} Under the same compression budget, the Frobenius-norm approximation error achieved by JD is no larger than that of HD.

\subsubsection{Hybrid KV Decomposition}

\begin{figure}[t]
  \centering
  \includegraphics[width=0.9\columnwidth]{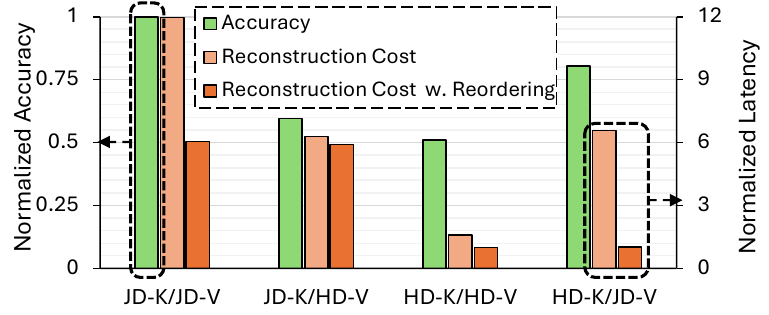}
  \caption{Normalized accuracy and reconstruction costs of JD and HD with different combinations for keys and values. The average zero-shot accuracy is measured across six tasks. The latency is the profiling result of the matrix multiplication operations involved in reconstruction on an NVIDIA RTX 4090 GPU.}
  \label{fig:joint_vs_head}
\end{figure}

The above observations indicate that (i) JD incurs $h\times$ higher reconstruction overhead than HD, (ii) value projections have higher (or equal) low-rank approximation error than key projections, and (iii) JD provides less (or equal) approximation error than HD.
\hyperref[fig:joint_vs_head]{Fig.~\ref*{fig:joint_vs_head}} further corroborates this trend: applying JD to both $W_K$ and $W_V$ yields the best accuracy but incurs the highest reconstruction latency, whereas using HD for keys and JD for values achieves the best accuracy-overhead balance.
Combining these insights, \sys~   adopts a hybrid decomposition strategy: since values are more sensitive and JD attains lower approximation error, we apply JD to decompose value projections; conversely, we apply HD to the less sensitive key projections to reduce reconstruction overhead. Importantly, operation reordering is applicable to the value reconstruction (\hyperref[sec:reordering]{Section~\ref*{sec:reordering}}) to compensate for the overhead. 
As shown in \hyperref[fig:joint_vs_head]{Fig.~\ref*{fig:joint_vs_head}}, the normalized latency is substantially reduced for the HD-$K$/JD-$V$ configuration after reordering, making our hybrid decomposition optimal for maintaining accuracy-overhead balance. 
Consistent with our hybrid decomposition, we apply separate adaptive compression losses to keys and values to give finer-grained control by modifying (\ref{eq:loss}):
\begin{equation}
\mathcal{L}_{\text{acmp}} = \sum_i \sum_{h} e^{-\alpha_{i,h}^{K}} + \sum_i e^{-\alpha_i^{V}},
\end{equation}
where $\alpha_{i,h}^{K}$ denotes the threshold for the key projection of head $h$ in  block $i$, and $\alpha_i^{V}$ denotes the  threshold for the value projection shared across all the heads 
in block $i$.

\begin{table*}[t]
\centering
\footnotesize
\setlength{\tabcolsep}{5pt}
\renewcommand{\arraystretch}{0.75}
\begin{tabular}{l | c | c c c | c c c c c c c }
\toprule
\multirow{2}{*}{\textbf{Model}} & \multirow{2}{*}{\textbf{Comp (\%)}} &
\multicolumn{3}{c|}{Perplexity $\downarrow$} &
\multicolumn{6}{c}{Zero-Shot Accuracy (\%) $\uparrow$} & \\
\cmidrule(lr){3-5} \cmidrule(lr){6-12}
& & \textbf{Wiki2} & \textbf{C4} & \textbf{lm-avg.} & \textbf{OBQA} & \textbf{PIQA} & \textbf{ARC-e} & \textbf{ARC-c} & \textbf{Hella} & \textbf{Wino} & Avg.\\
\midrule
\textbf{\small{LongChat-7B-v1.5}} & 0  
& 6.86 & 9.93 & 8.40 
& 41.00 & 76.28 & 71.84 & 41.38 & 71.20 & 67.48 & 61.53 \\
\midrule
Palu & 50 
& 7.37 & 11.67 & 9.52
& 38.20 & 73.78 & 67.63 & 37.63 & 68.43 & 65.27 & 58.49 \\
ReCalKV & 70 & 9.01 & 13.63 & 11.32 & 35.20 & 68.55 & 58.84 & 33.53 & 63.18 & 59.12 & 53.07\\
STAR-KV & 60 
& 6.83 & 9.98 & 8.41
& 41.80 & 75.90 & 72.69 & 41.47 & 70.49 & 66.85 & 61.53 \\
STAR-KV & 75 
& 7.34 & 10.44 & 8.89
& 42.60 & 74.86 & 71.63 & 41.55 & 68.52 & 63.77 & 60.49 \\
\midrule

\textbf{\small{LLaMA-2-7B}} & 0  
& 5.12 & 7.04 & 6.08
& 44.20 & 78.07 & 76.30 & 46.42 & 76.00 & 69.30 & 65.05 \\
\midrule
Palu & 50 
& 5.63 & 8.38 & 7.01
& 43.60 & 76.33 & 73.02 & 42.57 & 73.39 & 66.67 & 62.60 \\
ReCalKV & 70 & 6.75 & 13.63 & 11.32 & 39.80 & 74.48 & 70.37 & 39.42 & 69.59 & 65.75 & 59.90\\
STAR-KV & 60 
& 5.49 & 7.45 & 6.47
& 43.40 & 79.22 & 75.55 & 44.71 & 74.63 & 68.51 & 64.34 \\
STAR-KV & 75 
& 5.86 & 8.02 & 6.94
& 43.00 & 78.18 & 73.78 & 41.64 & 73.14 & 66.61 & 62.73 \\
\midrule

\textbf{\small{LLaMA-3-8B-Inst}} & 0  
& 7.74 & 12.61 & 10.18
& 43.00 & 78.51 & 81.61 & 56.57 & 75.95 & 71.59 & 67.87 \\
\midrule
Palu & 50 
& 9.43 & 17.37 & 13.40
& 42.40 & 76.12 & 76.14 & 49.06 & 70.33 & 71.82 & 64.31 \\
STAR-KV & 60 
& 8.52 & 13.51 & 11.01
& 42.20 & 78.13 & 79.29 & 49.83 & 73.37 & 69.69 & 65.42 \\
STAR-KV & 75 
& 10.20 & 15.46 & 12.83
& 42.20 & 78.02 & 77.57 & 49.32 & 71.59 & 66.22 & 64.15 \\
\bottomrule
\end{tabular}
\caption{Zero-shot evaluation on commonsense reasoning benchmarks using LM-Eval-Harness. We report accuracy (\%) under different KV cache compression rates. Perplexity on WikiText-2 and C4 is reported separately.}
\label{tab:zero_shot_results}
\end{table*}

\subsection{Integration with Quantization}
\label{sec:quant}

\begin{figure}[t]
  \centering
  \includegraphics[width=\columnwidth]{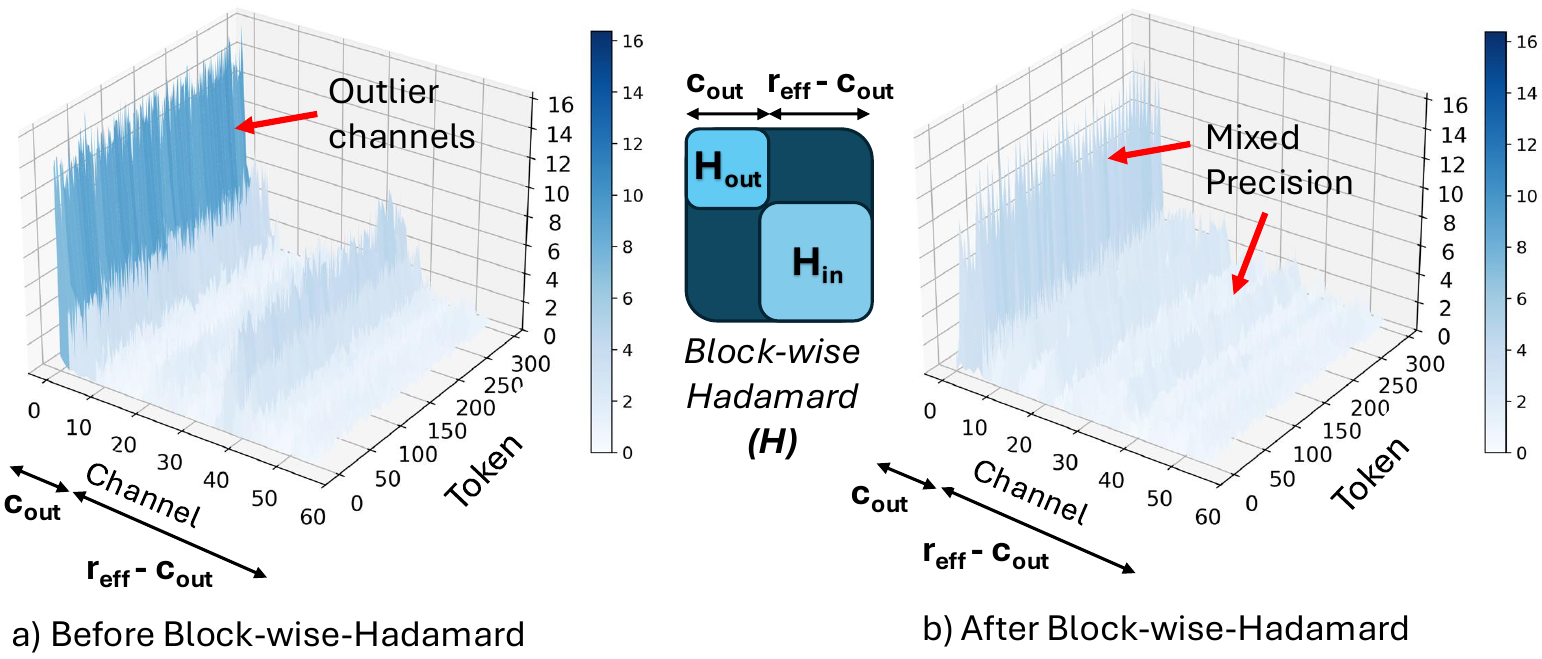}
  \caption{Visualization of low-rank key states for the first head of the third layer in LongChat before and after applying block-wise Hadamard, which redistributes channel magnitudes and enables mixed-precision quantization with minimal degradation.}
  \label{fig:quant}
\end{figure}

To further enhance compression beyond low-rank decomposition, we apply quantization to the resulting low-rank latent representations of keys and values. After decomposition, the latent key and value representations are scaled by the retained singular values. Because these singular values are ordered in decreasing magnitude, the resulting latent channels exhibit a highly skewed distribution: a small number of initial channels carry significantly larger magnitudes, whereas the remaining channels have much smaller magnitudes. This structure, illustrated in \hyperref[fig:quant]{Fig.~\ref*{fig:quant}(a)}, introduces prominent outlier channels that make subsequent quantization particularly challenging.
Prior work~\cite{chang2025palu} mitigates this issue by applying a  Hadamard transformation, which redistributes the outlier energy across channels. However, applying a  Hadamard transform globally with uniform precision over all latent channels incurs substantial accuracy degradation (as shown in \hyperref[sec:appendix-quant_ab]{Appendix~\ref*{sec:appendix-quant_ab}}), which becomes even more pronounced under aggressive compression. 
To address this limitation, we propose a low-rank--aware block-wise Hadamard quantization integrated with mixed precision. Let $Z \in \mathbb{R}^{l \times r}$ be the latent representation after low-rank compression. We partition the latent dimension into an outlier block $Z_{\text{out}}$ and an inlier block $Z_{\text{in}}$ based on channel magnitude, with $r_{\text{outlier}} + r_{\text{inlier}} = r$. As shown in \hyperref[fig:quant]{Fig.~\ref*{fig:quant}}, independent Hadamard transforms are applied to each block, followed by mixed-precision per-token quantization ($\mathcal{Q}$), given by:
\begin{equation}
\hat{Z} =
\big[
\mathcal{Q}_{b_{\text{out}}}(Z_{\text{out}} H_{\text{out}})
\;\;
\mathcal{Q}_{b_{\text{in}}}(Z_{\text{in}} H_{\text{in}})
\big],
\end{equation}
where $H_{\text{out}}$ and $H_{\text{in}}$ are Hadamard matrices, $b_\text{out}$ and $b_\text{in}$ are the bit precision for the outlier and inlier blocks, respectively, and $b_{\text{out}} > b_{\text{in}}$ and $r_{\text{outlier}} \gg r_{\text{inlier}}$. 
\hyperref[fig:quant]{Fig.~\ref*{fig:quant}(b)} illustrates the two smoothed regions after applying block-wise Hadamard quantization.
We reserve the top 20\% of $r$ latent channels in $b_{out}$ precision and quantize the remaining 80\% with $b_{in}$. 
Since $r$ is adaptive in our low-rank decomposition method, the corresponding block-wise Hadamard matrices also become adaptive to each decoder block (for keys and values) and to each head (for keys).
Moreover, our block-wise Hadamard quantization incurs no overhead as the Hadamard factors can be fused into the decomposed up/down projections as $H^{\top}U^{\top}$ and $(V\Sigma)H$.
\section{Experiments}

\begin{table*}[]
\centering
\footnotesize
\setlength{\tabcolsep}{2.0pt}
\begin{tabular}{l|c|ccccccc|c}
\toprule
\textbf{Model} & \textbf{Comp (\%)} &
\textbf{Qasper} & \textbf{QMSum} & \textbf{TriviaQA} & \textbf{MultiQA} &
\textbf{TREC} & \textbf{MultiNews} & \textbf{VCSum} & \textbf{Avg.} \\
\midrule
\textbf{LongChat-7B-v1.5} & 0
& 27.7 & 22.7 & 82.3 & 43.0 & 68.5 & 26.1 & 15.4 & 40.81 \\
\midrule
Palu & 30
& 23.81 & 22.64 & 80.61 & 44.15 & 64.5 & 25.4 &  14.08    &   39.31   \\
Palu & 50
& 21.1  & 22.4  & 75.81 & 40.78 & 62.5 & 22.6 & 12.58 & 36.8 \\
STAR-KV & 60
& 22.58 & 22.4 & 81.3 & 41.04 & 65.0 & 25.7 & 15.6 & 39.09 \\
\midrule
\textbf{LLaMA-3.1-8B-Inst} & 0
& 25.19 & 23.2 & 92.00 & 39.90 & 72.5 & 26.9 & 15.91 & 42.23 \\
\midrule
Palu & 30
& 14.47 & 23.3 & 86.71 &   26.57   & 73.0 &   26.19   & 8.33 &  36.93    \\
Palu & 50
& 15.38 &   22.10   & 73.36 & 27.58 & 63.5 & 21.66 & 1.95 & 32.21 \\
STAR-KV & 60
& 23.2 & 22.4 & 89.06 & 36.34 & 67.0 & 25.89 & 13.2 & 39.58 \\
\bottomrule
\end{tabular}
\caption{LongBench evaluation for \sys~and Palu under different KV cache compression rates. 
}
\label{tab:longbench_results}
\end{table*}

\begin{table*}[]
\centering
\footnotesize
\setlength{\tabcolsep}{4.0pt}
\renewcommand{\arraystretch}{0.8}
\begin{tabular}{l|c|ccccccccc|c}
\toprule
\textbf{Model / Method} & \textbf{Comp (\%)} &
\textbf{MK1} & \textbf{MK2} & \textbf{MQ} & \textbf{MV} &
\textbf{S1} & \textbf{S2} & \textbf{S3} &
\textbf{FWE} & \textbf{SQ} & \textbf{Avg.} \\
\midrule
\textbf{LongChat-7B-v1.5} & 0
& 99.80 & 99.60 & 98.40 & 96.55
& 100.0 & 100.0 & 100.0
& 48.20 & 56.72 & 88.80 \\
\midrule
Palu & 50
& 99.80 & 98.80 & 75.30 & 97.40
& 100.0 & 100.0 & 98.60
& 42.40 & 53.25 & 85.06 \\
\sys~& 60
& 99.40 & 99.60 & 98.50 & 96.55
& 100.0 & 100.0 & 99.80
& 46.00 & 54.45 & 88.26 \\
\sys, diff. seed & 60
& 99.20 & 99.80 & 98.35 & 97.60
& 100.0 & 100.0 & 100.0
& 49.93 & 53.48 & 88.71 \\
\midrule
\textbf{LLaMA-3.1-8B-Inst} & 0
& 100.0 & 99.8 & 99.9 & 98.95
& 100.0 & 100.0 & 99.6
& 96.07 & 78.12 & 96.94 \\
\midrule
Palu & 50
& 98.60 & 99.80 & 75.35 & 69.65
& 99.80 & 96.60 & 88.40
& 85.13 & 58.58 & 85.06 \\
\sys~& 60
& 98.4 & 86.0 & 85.1 & 84.0
& 100.0 & 100.0 & 92.0
& 87.73 & 68.05 & 89.03 \\
\bottomrule
\end{tabular}
\caption{RULER evaluation at a sequence length of 4K. All values are reported as percentages. Tasks evaluated are MK1/MK2: multi-key tasks, MQ: multi-query, MV: multi-value, S1--S3: single-key tasks, FWE: few-shot word extraction, and SQ: SQuAD-style QA.}
\label{tab:ruler_results_integrated}
\vspace{-7pt}
\end{table*}

\subsection{Experiment Settings}\label{sec:setting}
\noindent \textbf{Models and Tasks.} 
We evaluate our method on four LLM model families: LongChat-7B-v1.5-32K~\cite{li2023how}, LLaMA-2-7B~\cite{touvron2023llama2openfoundation}, LLaMA-3-8B-Instruct~\cite{grattafiori2024llama3herdmodels}, and LLaMA-3.1-8B-Instruct~\cite{grattafiori2024llama3herdmodels}. Model performance is evaluated using perplexity on WikiText-2~\cite{wikitext} and C4~\cite{dodge2021documentinglargewebtextcorpora}, and zero-shot accuracy measured with LM-Evaluation-Harness~\cite{eval-harness} on six tasks. For long-context evaluation, we additionally benchmark our method on LongBench~\cite{bai2024longbenchbilingualmultitaskbenchmark} and RULER~\cite{hsieh2024rulerwhatsrealcontext}, which together cover diverse document-level and structured long-context tasks. Additional dataset  and setting details are in \hyperref[datasets]{Appendix~\ref*{datasets}}.    

\noindent \textbf{Training and Compression Settings.}
To learn optimal thresholds for adaptive low-rank decomposition, we fine-tune models on a calibration set of merely 3000 samples from FineWeb-Edu~\cite{lozhkov2024fineweb-edu} dataset using our custom loss described in \hyperref[sec:objective]{Section~\ref*{sec:objective}}, and evaluate our method under two KV cache compression budgets - 60\% and 75\%. Additional details are provided in \hyperref[training]{Appendix~\ref*{training}}. Due to the small calibration set, the fine-tuning process incurs a modest training cost, taking just $<$6 GPU hours in total while using NVIDIA RTX Pro 6000.

\noindent \textbf{GPU Kernel Implementation.}
To efficiently execute attention in \sys, we implement custom kernels in Triton~\cite{triton}. These kernels fuse bandwidth-bound element-wise operations (e.g., RoPE and dequantization) with the matrix multiplications used for reconstruction, thereby reducing intermediate materialization and memory traffic. The details of the Triton kernels and additional implementation optimizations are provided in \hyperref[sec:appendix-sys_opt]{Appendix~\ref*{sec:appendix-sys_opt}}.



\subsection{Accuracy Analysis}


We show that adaptive rank selection across blocks and heads, enabled by the  soft thresholding, achieves significantly higher accuracy than uniform rank allocation, as demonstrated in \hyperref[sec:ablation]{Appendix~\ref*{sec:ablation}} and \hyperref[fig:comparison_rank_strategies]{Fig.~\ref*{fig:comparison_rank_strategies}}. We further study the accuracy impact across various benchmarks by comparing state-of-the-art adaptive rank control methods.

\noindent \textbf{Perplexity and Zero-Shot Evaluation.}
\hyperref[tab:zero_shot_results]{Table~\ref*{tab:zero_shot_results}} shows our method consistently preserves model quality under high KV cache compression rates.
On LongChat-7B-v1.5, our method at 60\% compression achieves a WikiText-2 perplexity of 6.83, matching the uncompressed baseline  and outperforming Palu at lower compression rates, while also substantially improving over ReCalKV~\cite{yan2025recalkvlowrankkvcache}. 
Similar trends are observed on LLaMA-2-7B and LLaMA-3-8B-Instruct, where our method maintains lower perplexity at both 60\% and 75\% compression compared to Palu and ReCalKV. We further evaluate zero-shot accuracy using LM-Eval-Harness across six benchmarks. Our method consistently achieves higher accuracy and compression than Palu and ReCalKV. On LongChat-7B-v1.5, our method at 60\% compression completely recovers the average baseline accuracy. For LLaMA-2-7B and LLaMA-3-8B-Instruct \sys~shows minimal degradation at 60\% compression and remains competitive by outperforming both Palu and ReCalKV even at 75\% compression.

\noindent \textbf{Long Context Evaluation.}
To evaluate long-context performance, we compare \sys~against Palu on both  LongBench and RULER, and report the results in \hyperref[tab:longbench_results]{Table~\ref*{tab:longbench_results}} and \hyperref[tab:ruler_results_integrated]{Table~\ref*{tab:ruler_results_integrated}}, respectively. Maintaining long-context accuracy becomes challenging at higher compression rates, particularly for GQA-based models, a phenomenon also observed by Palu. As shown in \hyperref[tab:longbench_results]{Table~\ref*{tab:longbench_results}}, Palu exhibits noticeable degradation at 50\% compression and therefore adopts a lower compression rate of 30\% for a better accuracy--compression trade-off. In contrast, \sys~maintains stronger accuracy at higher compression. At 60\% compression, \sys~matches Palu on LongChat-7B-v1.5 and outperforms it on LLaMA-3.1-8B-Instruct, substantially narrowing the gap to the uncompressed baseline.
We further validate this trend on RULER at a 4K sequence length. As shown in \hyperref[tab:ruler_results_integrated]{Table~\ref*{tab:ruler_results_integrated}}, \sys~consistently achieves higher compression than Palu while preserving stronger long-context retrieval accuracy. On LongChat-7B-v1.5, \sys~at 60\% compression reaches an average score of 88.26, close to the dense baseline of 88.80 and higher than Palu at 50\% compression. The additional run with a different seed obtains a higher average score of 88.71, indicating stable performance. On LLaMA-3.1-8B-Instruct, \sys~improves the RULER average from 85.06 to 89.03 over Palu at a higher compression rate. These results show that \sys~provides a more favorable accuracy--compression trade-off for long-context workloads across both LongBench and RULER.


\noindent \textbf{Quantization.}
\hyperref[tab:lrd_quant_longchat]{Table~\ref*{tab:lrd_quant_longchat}} presents the quantization results of \sys. With our adaptive block-wise Hadamard quantization, we preserve the first 20\% channels in 4-bit and quantize the remaining channels in 3-bit, resulting in average 3.2-bit precision. 
When applied on top of 75\% compressed LongChat model using our low-rank decomposition, our quantization achieves 20$\times$ overall KV cache compression while maintaining 59.18\% 0-shot average accuracy. This substantially outperforms prior KV quantization methods in both compression and accuracy, including KIVI~\cite{liu2024kivi}, KVQuant~\cite{hooper2024kvquant10millioncontext}, and Palu, demonstrating strong synergy between our low-rank decomposition and block-wise Hadamard quantization.
We further analyze the impact of different outlier--inlier splits and find that selecting the first 20\% of channels as outliers yields the best accuracy--compression trade-off. Results are shown in \hyperref[sec:appendix-split_sensitivity]{Appendix~\ref*{sec:appendix-split_sensitivity}}.

\subsection{System-level Performance Analysis}


\begin{table}[]
\centering
\footnotesize
\setlength{\tabcolsep}{3.0pt}
\renewcommand{\arraystretch}{0.8}
\begin{tabular}{l|c|c|ccccc|c}
\toprule
\textbf{Method} &
\textbf{Bit} &
\textbf{Comp.} &
\textbf{Avg. (\%)} \\
\midrule
LongChat-7B-v1.5
& 16  & 1$\times$
& 60.34 \\
\midrule
KIVI-2-gs32-r32
& 3.16 & 5.1$\times$
& 60.3 \\
KVQuant-2bit-1\%
& 2.33 & 6.9$\times$
& 58.00 \\
\midrule
Palu (50\%)
& 3   & 10.4$\times$
& 56.42\\
Palu (50\%)
& 4   & 8.0$\times$
& 56.71\\
\midrule
STAR-KV (60\%) 
& 3.2 & 12.5$\times$
& 60.12\\
STAR-KV (75\%) 
& 3.2 & 20.0$\times$
& 59.18\\
\bottomrule
\end{tabular}
\caption{ Quantization results on LongChat-7B-v1.5, reporting average KV bit precision, KV cache compression rate, and zero-shot accuracy on  commonsense benchmarks (ARC-e, ARC-c, OBQA, PIQA, Hella).}
\vspace{-5pt}
\label{tab:lrd_quant_longchat}
\end{table}

We evaluate the end-to-end attention speedups  by \sys~on LLaMA-2-7B, executed on a single NVIDIA RTX 4090, targeting regular consumer scenarios. We use PyTorch’s FP16 \texttt{scaled\_dot\_product\_attention} (SDPA) as the baseline, which dispatches to the fastest available kernel, including FlashAttention-2~\cite{dao2023flashattention} when applicable.

\noindent \textbf{Speedup.}
\hyperref[fig:speedup]{Fig.~\ref*{fig:speedup}} reports the speedup of the attention operator. We test two \sys{} variants at 75\% compression, with and without 4-bit KV quantization, and compare against Palu at 50\% compression (Palu does not provide an open-source quantized variant, so we report its FP16 results only). All experiments use a batch size 16.
Since the PyTorch baseline runs out of memory at 32K and 64K, we could not measure its latency on our hardware. We therefore estimated the baseline by linear extrapolation from the measured 8K and 16K latencies. We note, however, that the latency of \sys, with and without quantization, is \emph{measured on the GPU in all data points}, enabled by their reduced memory footprints.
\sys~extends context length support to 32K while achieving a 4.0$\times$ speedup. Adding 4-bit KV quantization reduces speedup at short contexts due to quantization/dequantization overhead, but enables 64K context length with 6.9$\times$ speedup.
A detailed breakdown of the attention latency is provided in \hyperref[sec:appendix-breakdown]{Appendix~\ref*{sec:appendix-breakdown}}.
To demonstrate the portability of our kernels, we extend the experiments to the RTX Pro 6000 GPU. When the context length is limited to fit within GPU memory, our method still achieves a 4.3$\times$ speedup. Details are provided in \hyperref[sec:appendix-speedup]{Appendix~\ref*{sec:appendix-speedup}}.

\noindent \textbf{Generation Throughput.}
We report the maximum achievable end-to-end generation throughput (tokens/s) of \sys~across context lengths in \hyperref[tab:throughput]{Table~\ref*{tab:throughput}}. For each context length, we increase batch size until the run becomes GPU-memory limited, and then measure the resulting generation throughput. As context length increases, the maximum feasible batch size decreases due to the larger KV cache footprint, which in turn reduces throughput. Overall, \sys~consistently improves end-to-end generation throughput across all tested settings, achieving a 3.1$\times$ average gain, with higher gains at longer context lengths. Notably, \sys~can fit up to 128K context length in a single  RTX 4090.

\begin{figure}[]
\centering
\includegraphics[width=0.8\columnwidth]{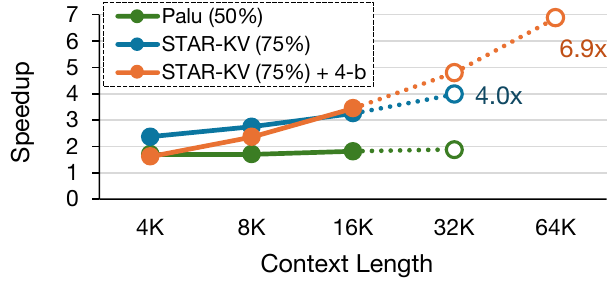}
\caption{
Normalized speedup of the attention module over the PyTorch SDPA FP16 implementation. The batch size is 16 in all setups. Solid lines represent exact measurements, while dashed lines indicate that the PyTorch baseline is out of memory. For these data points, the latencies are measured for the non-baseline variants, and the speedups are compared against the estimated baseline’s latency.}
\label{fig:speedup}
\end{figure}

\begin{table}[]
\centering
\footnotesize
\setlength{\tabcolsep}{2.5pt}
\renewcommand{\arraystretch}{0.5}
\begin{tabular}{c|c|c|c}
\toprule
\textbf{Contx. Len.} &
\textbf{Pytorch SDPA} &
\textbf{STAR-KV} &
\textbf{Gain} \\
\midrule
1K & 249.8 \textcolor{gray}{(16)} & 751.0 \textcolor{gray}{(128)} & 3.01$\times$ \\
2K & 130.4 \textcolor{gray}{(8)} & 400.7 \textcolor{gray}{(64)} & 3.07$\times$ \\
4K & 69.0 \textcolor{gray}{(4)} & 212.2 \textcolor{gray}{(32)} & 3.07$\times$ \\
8K & 35.6 \textcolor{gray}{(2)} & 110.6 \textcolor{gray}{(16)} & 3.11$\times$ \\
16K & 18.0 \textcolor{gray}{(1)} & 56.6 \textcolor{gray}{(8)} & 3.14$\times$ \\
\bottomrule
\end{tabular}
\caption{End-to-end generation throughput (tokens/s). The gray text in brackets denotes batch size.}
\vspace{-7pt}
\label{tab:throughput}
\end{table}

\section{Ablation Results}
We present extended studies examining: 1) the sensitivity of the learned rank profiles to calibration datasets and nearby model variants, 2) the impact of the calibration dataset on the final compression--accuracy trade-off, 3) the sensitivity of \sys~to the compression-loss weight ($\gamma$) in the total training loss, and 4) the effect of combining \sys~with token eviction.

\noindent\textbf{1) Stability of learned rank profiles.}
Our studies suggest that the learned rank profiles are highly consistent across different calibration datasets and transferable across nearby model variants, such as models from the same family with similar parameter counts. Results are shown in \hyperref[sec:appendix-rank_profile_stability]{Appendix~\ref*{sec:appendix-rank_profile_stability}}.

\noindent\textbf{2) Sensitivity to the calibration dataset.}
Our studies show that the learned thresholds are generally robust to the choice of calibration dataset, with at most $0.68\%$ difference in average zero-shot accuracy. Details are provided in \hyperref[sec:appendix-calibration_sensitivity]{Appendix~\ref*{sec:appendix-calibration_sensitivity}}.

\noindent\textbf{3) Sensitivity to the compression-loss weight.}
Our studies verify that the reported results are not caused by fragile hyperparameter settings. Using a different compression-loss weight results in only up to 0.25\% average zero-shot accuracy variation. Details are shown in \hyperref[sec:appendix-hyperparameter_sensitivity]{Appendix~\ref*{sec:appendix-hyperparameter_sensitivity}}.

\noindent\textbf{4) Combination with token eviction.}
We study the impact of combining \sys~with H2O~\cite{zhang2023h2oheavyhitteroracleefficient}, a representative KV cache compression method based on token pruning. Interestingly, on LongChat-v1.5-32K, combining 60\% low-rank compression with 60\% H20-style token pruning achieves 84\% KV cache compression outperforming H2O with 60\% pruning alone by 0.04\% average zero-shot accuracy. Results are shown in \hyperref[sec:appendix-h2o_combination]{Appendix~\ref*{sec:appendix-h2o_combination}}.

\section{Conclusion}

We present \sys, an adaptive low-rank KV cache compression framework that enables fine-grained rank control across attention heads and decoder blocks. By introducing a soft-thresholding mechanism, \sys~learns effective rank allocation directly from data, avoiding static or heuristic rank selection. Guided by sensitivity analysis, \sys~adopts a hybrid decomposition strategy that balances reconstruction cost and accuracy, and further integrates a low-rank--aware mixed-precision quantization to enable near-lossless low-bit KV storage at high compression.
Across multiple LLMs and benchmarks, \sys~achieves up to 75\% KV cache reduction from low-rank compression alone and up to 20$\times$ overall KV cache reduction when combined with quantization, while maintaining model quality. Also, custom Triton kernels translate these algorithmic gains into practice, enabling 3.5$\times$ attention speedup at a context length of 16K and up to 6.9$\times$ (with an estimated baseline due to memory capacity limitation) at a context length of 64K.  Finally, \sys~provides 3.1$\times$ end-to-end generation throughput improvement.





\section*{Acknowledgments}

This work was supported by the Korea Planning \& Evaluation Institute of Industrial Technology (KEIT) grant funded by the Korea government (MOTIR) (No.~RS-2024-00419977, Development of a Vector Database Accelerator for Large Language Models (LLM)).

This work was also supported by the National Research Foundation of Korea (NRF) grant funded by the Korea government (MSIT) (No.~RS-2025-00561961).

\section*{Impact Statement}
This paper presents work whose goal is to advance the field of Machine Learning. There are many potential societal consequences of our work, none of which we feel must be specifically highlighted here.


\bibliography{refs}
\bibliographystyle{icml2026}

\newpage
\appendix
\onecolumn

\section{Appendix}

\subsection{Ablation Studies}

\subsubsection{Static vs. Adaptive Rank Selection}
\label{sec:ablation}

We compare adaptive rank selection against static low-rank compression. Static rank selection assigns a \emph{uniform} rank budget across all decoder blocks and attention heads, and applies the same rank to both key and value projections, ignoring the layer/head-specific spectral characteristics. As shown in \hyperref[tab:ablate_static_vs_adaptive_rank]{Table~\ref*{tab:ablate_static_vs_adaptive_rank}}, this uniform allocation leads to substantial accuracy degradation under the same compression target and training settings, highlighting the importance of \sys's adaptive per-layer/head rank selection.

\begin{table}[h]
\centering
\small
\begin{tabular}{l |c| cccccc| c}
\toprule
Method & Comp (\%) & OBQA & PIQA & ARC-e & ARC-c & Hella & Wino & Avg. \\
\midrule
Static Low-rank Decomposition & 75 &
26.80 & 58.49 & 37.52 & 23.38 & 37.30 & 49.80 & 38.88 \\
\sys~& 75 &
42.60 & 74.86 & 71.63 & 41.55 & 68.52 & 63.77 & 60.49 \\
\bottomrule
\end{tabular}
\caption{Comparison of static vs. adaptive rank selection under the same KV cache compression target.}
\label{tab:ablate_static_vs_adaptive_rank}
\end{table}

\begin{figure}[h]
\centering
\includegraphics[width=0.7\columnwidth]{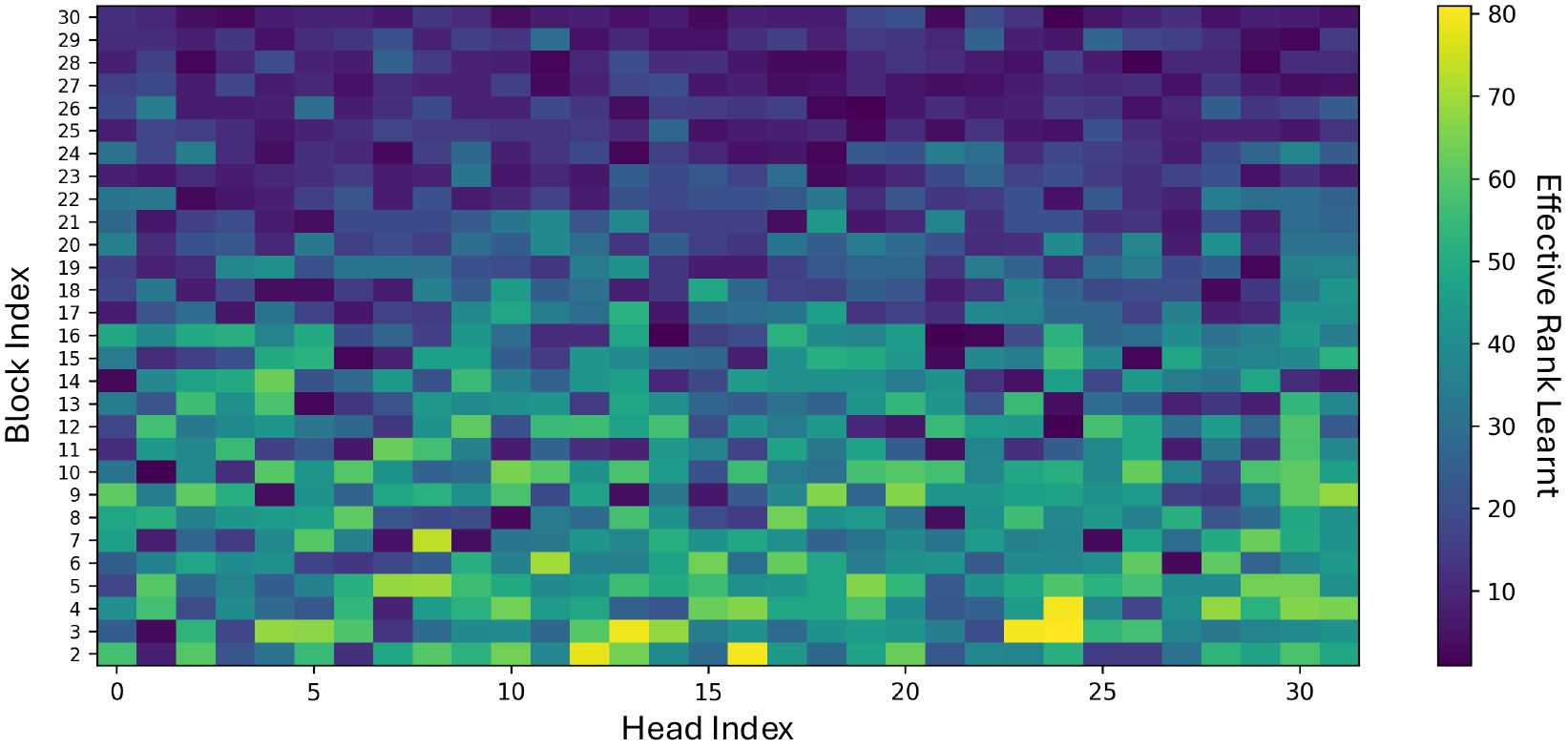}
\caption{
Ranks learned for Key projection matrices for each block and head using our adaptive rank selection strategy at 75\% compression. The maximum rank of each head is 128.}
\label{fig:key_rank}
\end{figure}

\begin{figure}[h]
\centering
\includegraphics[width=0.6\columnwidth]{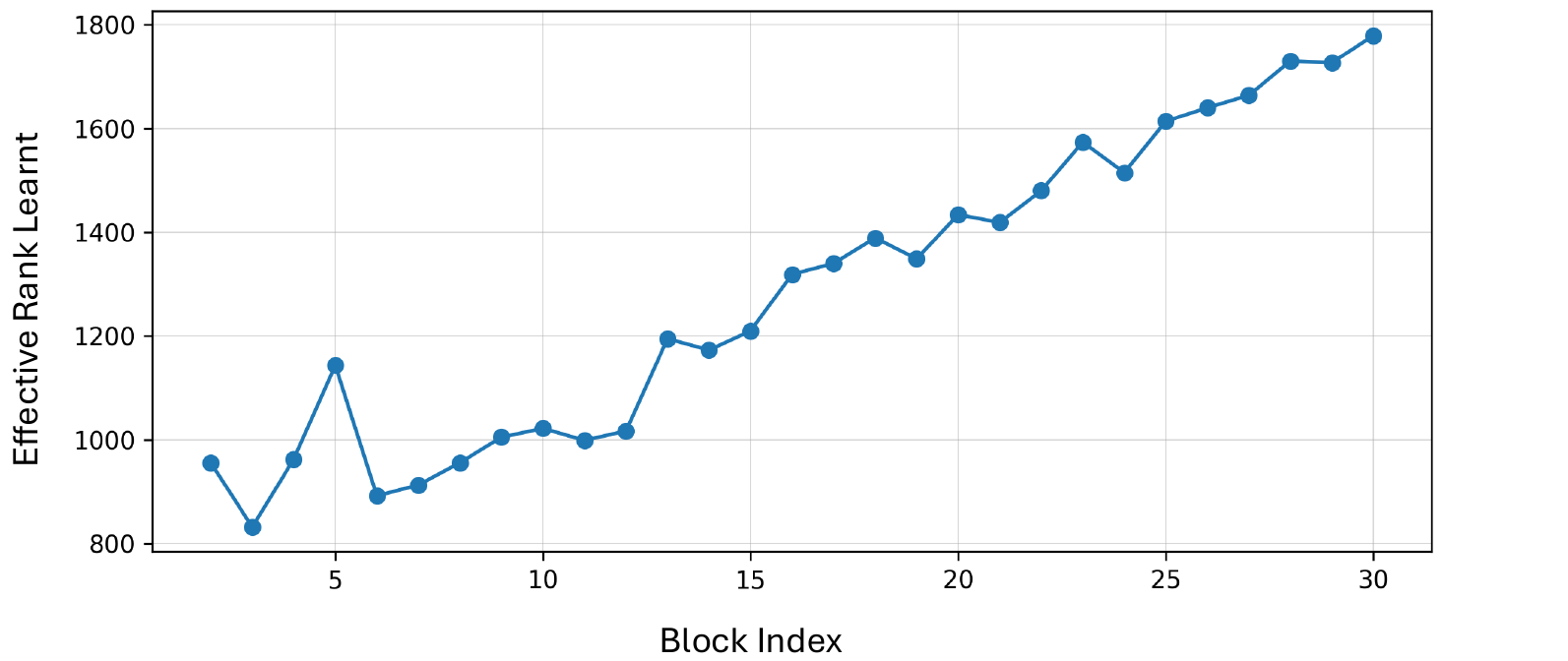}
\caption{
Ranks learned for Value projection matrices for each block using our adaptive rank selection strategy at 75\% compression. The maximum rank of each layer is 4096.}
\label{fig:val_rank}
\end{figure}

\hyperref[fig:key_rank]{Fig.~\ref*{fig:key_rank}} and \hyperref[fig:val_rank]{Fig.~\ref*{fig:val_rank}} present the rank maps learned by the key and value projections, respectively, using our adaptive rank selection strategy based on the soft-thresholding mechanism.

\subsubsection{Stability of Learned Rank Profiles}
\label{sec:appendix-rank_profile_stability}

We further analyze whether the learned rank profiles are stable across calibration
datasets and nearby model variants. To use a unified metric, we define a generic
profile distance between two settings $A$ and $B$ as
\begin{equation}
\Delta_T(A,B) =
\frac{1}{|\mathcal{I}_T|}
\sum_{i \in \mathcal{I}_T}
\left| p_i^T(A) - p_i^T(B) \right|,
\qquad T \in \{K,V\},
\end{equation}
where $T$ denotes the tensor type, and $\mathcal{I}_T$ is the set of profile
entries being compared. For keys, the profile is defined per layer and per head,
i.e., $\mathcal{I}_K=\{(\ell,h):\ell\in\mathcal{L},h\in\mathcal{H}\}$; for values,
the profile is defined per layer, i.e., $\mathcal{I}_V=\{\ell:\ell\in\mathcal{L}\}$.
We instantiate $p_i^T(\cdot)$ differently depending on the comparison.

\noindent\textbf{Case 1: same model, different calibration datasets.}
When comparing two calibration datasets $D$ and $D'$ for the same model, raw ranks
are directly comparable. We therefore set
\begin{equation}
p_{(\ell,h)}^K(D) = r_{\ell,h}^K(D),
\qquad
p_{\ell}^V(D) = r_{\ell}^V(D),
\end{equation}
where $r_{\ell,h}^K(D)$ is the learned key rank of head $h$ in layer $\ell$, and
$r_{\ell}^V(D)$ is the learned value rank of layer $\ell$. Thus,
\begin{equation}
\Delta_K(D,D') =
\frac{1}{|\mathcal{L}| |\mathcal{H}|}
\sum_{\ell \in \mathcal{L}} \sum_{h \in \mathcal{H}}
\left| r_{\ell,h}^K(D) - r_{\ell,h}^K(D') \right|,
\end{equation}
and
\begin{equation}
\Delta_V(D,D') =
\frac{1}{|\mathcal{L}|}
\sum_{\ell \in \mathcal{L}}
\left| r_{\ell}^V(D) - r_{\ell}^V(D') \right|.
\end{equation}
For readability, we report these rank differences together with their normalized
forms, e.g., $1.6/128$ ranks per head for keys and $20/4096$ ranks per layer for
values.

\noindent\textbf{Case 2: same calibration dataset, different model variants.}
When comparing nearby model variants $M$ and $M'$ under the same calibration
dataset, raw ranks may not be directly comparable due to different attention
layouts, such as GQA and non-GQA. 
We therefore compare the learned compression-rate profiles instead of raw rank
profiles. Specifically, for keys and values, we define the per-entry compression
profile as the normalized learned rank:
\begin{equation}
c_{(\ell,h)}^K(M) =
\frac{r_{\ell,h}^K(M)}{d_h},
\qquad
c_{\ell}^V(M) =
\frac{r_{\ell}^V(M)}{d_{\mathrm{model}}},
\end{equation}
where $r_{\ell,h}^K(M)$ is the learned key rank of head $h$ in layer $\ell$,
$r_{\ell}^V(M)$ is the learned value rank of layer $\ell$, $d_h$ is the key head
dimension, and $d_{\mathrm{model}}$ is the value hidden dimension. We then set
\begin{equation}
p_i^T(M) = c_i^T(M),
\qquad T \in \{K,V\}.
\end{equation}
The cross-model profile distance is defined as
\begin{equation}
\Delta_T(M,M') =
\frac{1}{|\mathcal{I}_T|}
\sum_{i \in \mathcal{I}_T}
\left| c_i^T(M) - c_i^T(M') \right|,
\qquad T \in \{K,V\}.
\end{equation}
We report $\Delta_T(M,M')$ as a percentage by multiplying the normalized distance
by $100$.

\begin{table}[h] \centering \small \begin{tabular}{l | c | c} \toprule Comparison & Key-profile deviation & Value-profile deviation \\ \midrule C4 vs. FineWeb-Edu & $1.6/128$ ranks per head $(1.2\%)$ & $20/4096$ ranks per layer $(0.5\%)$ \\ RedPajama vs. FineWeb-Edu & $1.7/128$ ranks per head $(1.3\%)$ & $7.7/4096$ ranks per layer $(0.2\%)$ \\ LLaMA-2-7B vs. LLaMA-3.1-8B-Inst & $3.2\%$ & $11.4\%$ \\ \bottomrule \end{tabular} \caption{Comparison of learned rank profiles across calibration datasets and nearby model variants. Dataset comparisons use LongChat-v1.5-32K.} \label{tab:rank_profile_stability} \end{table}

Table~\ref{tab:rank_profile_stability} shows learned profiles are highly consistent across calibration datasets - FineWeb-Edu, C4~\cite{dodge-etal-2021-documenting} and RedPajama~\cite{10.5555/3737916.3741613}. On LongChat-v1.5-32K, the key-rank difference is only $1.2$--$1.3\%$ of the head dimension, while the value-rank difference remains below $0.5\%$ of the model dimension. Across nearby model variants, the key compression profile remains especially stable, while the value profile shows larger but still moderate variation. This indicates that the learned rank allocation captures persistent layer-wise sensitivity patterns.

\subsubsection{Sensitivity to the Calibration Dataset}
\label{sec:appendix-calibration_sensitivity}

We evaluate whether the choice of calibration dataset affects the final compression--accuracy trade-off. We calibrate \sys~using FineWeb-Edu, C4, and RedPajama, including an additional FineWeb-Edu run with a different seed. As shown in \hyperref[tab:calibration_sensitivity]{Table~\ref*{tab:calibration_sensitivity}}, the final zero-shot average varies only mildly across calibration datasets at the same 60\% KV cache compression ratio, suggesting that the learned thresholds are generally robust to the calibration source.

\begin{table}[h]
\centering
\footnotesize
\setlength{\tabcolsep}{3.0pt}
\renewcommand{\arraystretch}{0.8}
\begin{tabular}{l | l | c | cccccc | c}
\toprule
Model / Method & Calibration Data & Comp (\%) & OBQA & PIQA & ARC-e & ARC-c & Hella & Wino & Avg. \\
\midrule
\textbf{LongChat-v1.5-32K} & -- & 0
& 41.00 & 76.28 & 71.84 & 41.38 & 71.20 & 67.48 & 61.53 \\
\midrule
\multirow{4}{*}{\sys}
& FineWeb-Edu & 60
& 41.80 & 75.90 & 72.69 & 41.47 & 70.49 & 66.85 & 61.53 \\
& C4 & 60
& 42.20 & 76.77 & 71.76 & 40.78 & 70.53 & 66.30 & 61.39 \\
& RedPajama & 60
& 43.00 & 76.66 & 71.59 & 41.98 & 68.09 & 65.75 & 61.18 \\
& FineWeb-Edu, diff. seed & 60
& 42.60 & 76.06 & 71.93 & 41.81 & 68.54 & 65.98 & 61.15 \\
\midrule
\textbf{LLaMA-3.1-8B-Inst} & -- & 0
& 42.60 & 80.96 & 81.73 & 54.86 & 79.17 & 73.72 & 68.84 \\
\midrule
\multirow{3}{*}{\sys}
& FineWeb-Edu & 60
& 41.80 & 79.27 & 81.02 & 51.45 & 75.52 & 69.53 & 66.43 \\
& C4 & 60
& 42.80 & 77.97 & 79.55 & 50.94 & 75.10 & 70.09 & 66.08 \\
& RedPajama & 60
& 42.40 & 78.07 & 79.46 & 50.43 & 74.37 & 69.77 & 65.75 \\
\bottomrule
\end{tabular}
\caption{Sensitivity to calibration dataset at 60\% KV cache compression.}
\label{tab:calibration_sensitivity}
\end{table}

\subsubsection{Effect of Hadamard Rotation and Mixed-Precision Quantization}
\label{sec:appendix-quant_ab}

\begin{table}[h]
\centering
\small
\begin{tabular}{l |c| c|c|c|c|c| c}
\toprule
Method & Bit & OBQA & PIQA & ARC-e & ARC-c & Hella & Avg. \\
\midrule
Global Hadamard (w/o mixed precision)
& 3.0 & 40.80 & 74.32 & 69.32 & 39.42 & 67.27 & 58.23 \\
Mixed precision (w/o Hadamard)
& 3.2 & 42.00 & 74.10 & 67.93 & 37.80 & 64.51 & 57.27 \\
\midrule
\sys~& 3.2 & 41.80 & 74.86 & 70.83 & 40.70 & 67.72 & 59.18 \\
\bottomrule
\end{tabular}
\caption{Ablation under 75\% low-rank KV compression. We compare three strategies - 1) global Hadamard without mixed precision, 2) mixed-precision quantization without Hadamard transformation, and 3) the proposed block-wise Hadamard with mixed precision.}
\label{tab:ablate_hadamard_mixedprec}
\end{table}

\hyperref[tab:ablate_hadamard_mixedprec]{Table~\ref*{tab:ablate_hadamard_mixedprec}} isolates the roles of structured rotation and mixed-precision quantization.
Applying a single global Hadamard with uniform 3-bit quantization improves robustness over naive quantization but still exhibits noticeable accuracy degradation, as it cannot selectively protect high-magnitude channels.
Conversely, mixed-precision quantization alone, preserving the top 20\% channels in 4-bit while quantizing the rest in 3-bit, resulting in a 3.2-bit average, also results in degraded performance, indicating that selective bit allocation without structured rotation is insufficient.
In contrast, block-wise Hadamard enables effective mixed-precision quantization by redistributing outlier energy within each block. This combination achieves the best overall accuracy, demonstrating that neither Hadamard rotation nor mixed precision alone is sufficient; their synergy becomes vital for robust low-bit low-rank KV cache compression.

\subsubsection{Sensitivity to Compression Weight}
\label{sec:appendix-hyperparameter_sensitivity}

We further analyze the sensitivity of \sys~to the compression weight $\gamma$ and the outlier--inlier split ratio.
These ablations verify that the reported results are not caused by fragile hyperparameter settings.

\begin{table}[h]
\centering
\small
\begin{tabular}{c | c | cccccc | c}
\toprule
$\gamma$ & Comp (\%) & OBQA & PIQA & ARC-e & ARC-c & Hella & Wino & Avg. \\
\midrule
1.0  & 60 & 42.60 & 76.22 & 71.84 & 42.41 & 68.44 & 66.14 & 61.28 \\
0.1  & 60 & 41.80 & 75.90 & 72.69 & 41.47 & 70.49 & 66.85 & 61.53 \\
0.01 & 60 & 42.40 & 76.28 & 71.68 & 42.75 & 68.58 & 66.30 & 61.33 \\
\bottomrule
\end{tabular}
\caption{Sensitivity to compression weight $\gamma$ on LongChat-v1.5-32K at 60\% KV cache compression.}
\label{tab:gamma_sensitivity}
\end{table}

As shown in \hyperref[tab:gamma_sensitivity]{Table~\ref*{tab:gamma_sensitivity}}, the average zero-shot accuracy varies only mildly across the tested range. We use $\gamma=0.1$ in the main experiments because it achieves the best average performance among the tested values.

\subsubsection{Sensitivity to the Outlier--Inlier Split}
\label{sec:appendix-split_sensitivity}

\hyperref[tab:outlier_split_sensitivity]{Table~\ref*{tab:outlier_split_sensitivity}} shows that the 20:80 split provides a strong balance between compression, speed-up, and downstream accuracy, and is therefore used as the default setting.

\begin{table}[h]
\centering
\footnotesize
\setlength{\tabcolsep}{4.0pt}
\renewcommand{\arraystretch}{0.8}
\begin{tabular}{c | c | c | c | ccccc | c}
\toprule
Bits & Out:In & Comp ($\times$) & Speedup ($\times$)
& OBQA & PIQA & ARC-e & ARC-c & Hella & Avg. \\
\midrule
16  & --    & 1.00  & 1.00 & 41.00 & 76.28 & 71.84 & 41.38 & 71.20 & 60.34 \\
\midrule
3.1 & 10:90 & 20.64 & 4.16 & 42.20 & 74.54 & 70.54 & 38.99 & 67.74 & 58.80 \\
3.2 & 20:80 & 20.00 & 4.09 & 41.80 & 74.86 & 70.83 & 40.70 & 67.72 & 59.18 \\
3.3 & 30:70 & 19.39 & 4.05 & 42.40 & 74.59 & 71.38 & 38.91 & 67.93 & 59.04 \\
3.4 & 40:60 & 18.82 & 3.99 & 42.20 & 74.81 & 71.30 & 40.96 & 68.35 & 59.52 \\
\bottomrule
\end{tabular}
\caption{Sensitivity to the outlier--inlier split ratio on LongChat-v1.5-32K. The reported average is computed over the listed zero-shot tasks. The speedup is measured on an NVIDIA RTX 4090 GPU.}
\label{tab:outlier_split_sensitivity}
\end{table}

\subsection{Proof for \hyperref[lemma2]{Lemma~\ref*{lemma2}}}
\label{proof_lemma2}

The feasible set of the joint rank-$R$ decomposition problem is
\begin{equation}
\mathcal{S}_R \;=\; \{\, Y \in \mathbb{R}^{m \times d} \;:\; \mathrm{rank}(Y) \le R \,\},
\end{equation}
i.e., $\mathcal{S}_R$ is the collection of all matrices with rank at most $R$.

For any head-wise candidate $X = [X^{(1)}\ \cdots\ X^{(h)}]$ with
$\mathrm{rank}(X^{(i)}) \le r$ for all $i$, at the same compression budget ($R=hr$), its rank satisfies
\begin{equation}
\mathrm{rank}(X) \;\le\; \sum_{i=1}^h \mathrm{rank}(X^{(i)}) \;\le\; hr \;=\; R,
\end{equation}
which means that $X \in \mathcal{S}_R$. Therefore, the entire head-wise candidate family
is a subset of the joint candidate family $\mathcal{S}_R$. In particular,
$\widehat W_{\mathrm{hw}} \in \mathcal{S}_R$ is feasible for the joint optimization, and since
$\widehat W_{\mathrm{joint}}$ minimizes $\|W-Y\|_F$ over all $Y \in \mathcal{S}_R$, we conclude
\begin{equation}
\|W-\widehat W_{\mathrm{joint}}\|_F \;\le\; \|W-\widehat W_{\mathrm{hw}}\|_F .
\end{equation}

\subsection{Training Settings}
\label{training}

All experiments are trained using the loss formulation defined in (\hyperref[total_loss]{\ref*{total_loss}}), which combines the distillation loss with a compression loss.
We use the AdamW optimizer for all experiments.
The learning rate is set to $2\times10^{-5}$ for all model parameters, while a higher learning rate of $1\times10^{-2}$ is used for the adaptive spectral threshold parameters $\alpha$ to enable rapid convergence of rank selection.
Training is conducted on a subset of 3{,}000 samples randomly sampled from the FineWeb-Edu dataset. We first train the model for one full epoch on this dataset using the combined loss.
After the adaptive compression parameters converge, we disable the compression loss and continue training using only the distillation loss for an additional 1{,}000 optimization steps on the same dataset.
This second phase further stabilizes the model outputs introduced by compression-aware training. The sequence length is set to 8192 tokens for LongChat-v1.5-7B-32K, LLaMA-3-8B-Instruct and LLaMA-3.1-8B-Instruct models, and 4096 tokens for LLaMA-2-7B. In addition, we do not apply compression to layers 0, 1, and 31, which have been shown to be particularly sensitive across models~\cite{mu2025salssparseattentionlatent}, in order to ensure stable accuracy. All experiments are conducted on two NVIDIA RTX PRO 6000 GPUs. Training one epoch under the above configuration takes approximately 6 GPU hours.

\subsection{Benchmark Details}
\label{datasets}

We evaluate the proposed method using three standard benchmark suites: LM-Eval-Harness, LongBench, and RULER.

\noindent \textbf{LM-Eval-Harness.}
We use the LM-Eval-Harness framework to evaluate zero-shot accuracy on a diverse set of reasoning and understanding benchmarks, including OpenBookQA, PIQA, ARC-easy, ARC-challenge, HellaSwag, and WinoGrande.
Reported results follow the default evaluation protocols of LM-Eval-Harness, and accuracy (\%) is used as the primary metric.
For each configuration, we also report the average accuracy across all evaluated tasks.

\noindent \textbf{Long Context Evaluation.}
To assess long-context reasoning and information retrieval capabilities, we evaluate on LongBench tasks, including question answering, summarization, and multi-document understanding benchmarks such as Qasper, QMSum, TriviaQA, MultiQA, TREC, MultiNews, and VCSum.
We report task-specific accuracy or F1 scores as defined by the benchmark, along with the average performance across tasks.
We further evaluate long-context robustness using the RULER benchmark. Unless otherwise specified, RULER evaluations are conducted at a fixed sequence length of 4K tokens. We additionally include a 16K RULER evaluation on LongChat-v1.5-32K to test longer-context robustness.

\subsection{Additional Results}
\label{sec:appendix_result}

\subsubsection{Additional Models}
\label{sec:additional_models}

We include additional evaluations on Mistral-7B-Instruct-v0.2~\cite{jiang2023mistral7b} and LLaMA-2-13B~\cite{touvron2023llama2openfoundation} in \hyperref[tab:additional_model_scale]{Table~\ref*{tab:additional_model_scale}}. 
On Mistral-7B-Instruct-v0.2, \sys~achieves a higher zero-shot average than ReCalKV at the same 60\% KV cache compression rate. On LLaMA-2-13B, \sys~substantially outperforms Palu at 60\% compression and remains competitive even at 75\% compression. 

\begin{table}[h]
\centering
\footnotesize
\setlength{\tabcolsep}{2.7pt}
\renewcommand{\arraystretch}{0.8}
\begin{tabular}{l | l | c | ccc | cccccc | c}
\toprule
Model & Method & Comp (\%) & Wiki2 & C4 & LM Avg. & OBQA & PIQA & ARC-e & ARC-c & Hella & Wino & Avg. \\
\midrule
\textbf{Mistral-7B-Inst.} & Baseline & 0
& 5.94 & 9.72 & 7.83 & 46.80 & 80.41 & 81.31 & 55.63 & 83.48 & 74.35 & 70.33 \\
& Palu & 60
& 7.07 & 12.93 & 10.00 & 42.80 & 77.26 & 74.07 & 50.00 & 75.30 & 70.72 & 65.03 \\
& ReCalKV & 60
& -- & -- & -- & 44.00 & 79.27 & 77.78 & 52.20 & 77.88 & 72.30 & 67.24 \\
& \sys~& 60
& 8.01 & 10.81 & 9.41 & 44.00 & 80.52 & 79.80 & 52.13 & 78.79 & 71.11 & 67.73 \\
\midrule
\textbf{LLaMA-2-13B} & Baseline & 0
& 5.71 & 8.19 & 6.95 & 44.20 & 79.22 & 77.57 & 50.51 & 79.71 & 71.03 & 67.04 \\
& Palu & 60
& 6.48 & 9.69 & 8.09 & 42.00 & 76.28 & 71.80 & 41.72 & 72.99 & 69.93 & 62.45 \\
& \sys~& 60
& 5.69 & 8.22 & 6.96 & 43.60 & 79.00 & 77.36 & 48.12 & 79.30 & 71.27 & 66.44 \\
& \sys~& 75
& 5.89 & 8.66 & 7.28 & 42.40 & 77.97 & 75.80 & 46.16 & 77.79 & 68.35 & 64.75 \\
\bottomrule
\end{tabular}
\caption{Additional results on Mistral-7B-Instruct-v0.2 and LLaMA-2-13B. For LLaMA-2-13B, perplexity is evaluated at sequence length 1024 due to memory constraints.}
\label{tab:additional_model_scale}
\end{table}


\subsubsection{RULER Evaluation}
\label{sec:ruler_appendix}

\begin{table}[h]
\centering
\small
\begin{tabular}{l | c | c}
\toprule
Method & Comp (\%) & RULER@16K Avg. \\
\midrule
Baseline & 0  & 85.69 \\
Palu  & 60 & 62.94 \\
\sys~& 60 & 84.16 \\
\bottomrule
\end{tabular}
\caption{RULER evaluation at a sequence length of 16K on LongChat-v1.5-32K.}
\label{tab:ruler_16k}
\end{table}

The RULER results in \hyperref[tab:ruler_results_integrated]{Table~\ref*{tab:ruler_results_integrated}} show that \sys~preserves long-context performance under compression at 4K sequence length. The additional seed on LongChat-v1.5-32K obtains a similar average, indicating stable long-context behavior across repeated calibration runs. At 16K context length, \hyperref[tab:ruler_16k]{Table~\ref*{tab:ruler_16k}} shows that \sys~remains close to the baseline while substantially outperforming Palu at the same 60\% compression rate.

\subsubsection{LongBench Evaluation}
\label{sec:longbench_appendix}

\begin{table}[h]
\centering
\footnotesize
\setlength{\tabcolsep}{4.0pt}
\renewcommand{\arraystretch}{0.8}
\begin{tabular}{l | c | ccccccc | c}
\toprule
Method & Comp (\%) & Qasper & QMSum & TriviaQA & MultiQA & TREC & MultiNews & VCSum & Avg. \\
\midrule
Baseline & 0
& 25.19 & 23.20 & 92.00 & 39.90 & 72.50 & 26.90 & 15.91 & 42.23 \\
Palu & 30
& 14.47 & 23.30 & 86.71 & 26.57 & 73.00 & 26.19 & 8.33 & 36.93 \\
Palu & 50
& 15.38 & 22.10 & 73.36 & 27.58 & 63.50 & 21.66 & 1.95 & 32.21 \\
\sys~& 50
& 23.15 & 22.57 & 89.32 & 38.29 & 71.50 & 26.29 & 14.43 & 40.79 \\
\sys~& 60
& 22.58 & 22.40 & 81.30 & 41.04 & 65.00 & 25.70 & 15.60 & 39.09 \\
\bottomrule
\end{tabular}
\caption{LongBench results on LLaMA-3.1-8B-Instruct.}
\label{tab:longbench_appendix}
\end{table}

\hyperref[tab:longbench_appendix]{Table~\ref*{tab:longbench_appendix}} reports additional LongBench results on LLaMA-3.1-8B-Instruct. At 50\% compression, \sys~achieves an average score of 40.79\%, close to the baseline of 42.23\% and higher than Palu at 30\% compression. Even at 60\% compression, \sys~remains substantially better than Palu at 50\% compression.

\subsubsection{Combination with Token Eviction}
\label{sec:appendix-h2o_combination}
\sys~compresses KV cache through learned low-rank structure, whereas token pruning methods reduce the number of retained tokens. These two directions are therefore complementary. \hyperref[tab:h2o_combination]{Table~\ref*{tab:h2o_combination}} combines \sys~with H2O. On LongChat-v1.5-32K, the combined method with 60\% compression from low-rank and 60\% pruning, reaches 84\% KV cache compression with an accuracy better than H2O (60\% pruning) alone. On LLaMA-2-13B, combining \sys~with H2O increases compression from 75\% (from low-rank) to 90\% (combined with 60\% pruning) while causing only a small additional zero-shot degradation compared with \sys~alone.

\begin{table}[h]
\centering
\footnotesize
\setlength{\tabcolsep}{3.8pt}
\renewcommand{\arraystretch}{0.8}
\begin{tabular}{l | l | c | cccccc | c}
\toprule
Model & Method & Comp (\%) & OBQA & PIQA & ARC-e & ARC-c & Hella & Wino & Avg. \\
\midrule
\textbf{LongChat-v1.5-32K} & Baseline & 0
& 41.00 & 76.28 & 71.84 & 41.38 & 71.20 & 67.48 & 61.53 \\
& H2O & 60
& 39.20 & 75.63 & 69.99 & 40.36 & 68.99 & 65.51 & 59.95 \\
& \sys~& 60
& 41.80 & 75.90 & 72.69 & 41.47 & 70.49 & 66.85 & 61.53 \\
& \sys~+ H2O & 84
& 40.80 & 75.46 & 70.03 & 40.36 & 68.32 & 64.96 & 59.99 \\
\midrule
\textbf{LLaMA-2-13B} & Baseline & 0
& 44.20 & 79.22 & 77.57 & 50.51 & 79.71 & 71.03 & 67.04 \\
& H2O & 60
& 42.80 & 79.92 & 78.07 & 49.15 & 79.17 & 71.35 & 66.74 \\
& \sys~& 75
& 42.40 & 77.97 & 75.80 & 46.16 & 77.79 & 68.35 & 64.75 \\
& \sys~+ H2O & 90
& 42.20 & 77.97 & 75.46 & 46.33 & 77.67 & 66.85 & 64.41 \\
\bottomrule
\end{tabular}
\caption{Combination with H2O. Token eviction and low-rank KV compression are complementary because they reduce different axes of the KV cache.}
\label{tab:h2o_combination}
\end{table}

\subsubsection{Latency Breakdown}
\label{sec:appendix-breakdown}

\begin{figure}[h]
\centering
\includegraphics[width=0.8\textwidth]{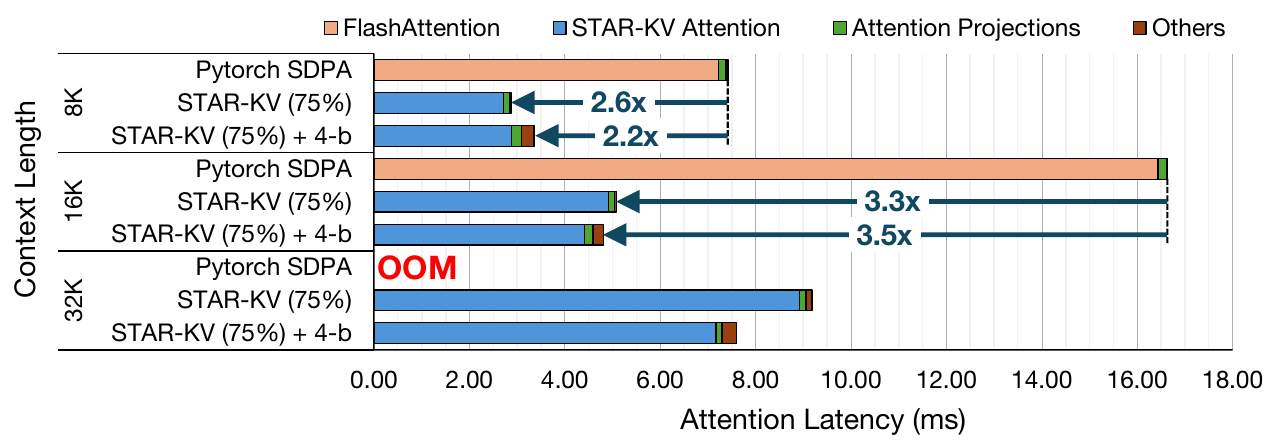}
\caption{
The latency breakdown of the attention operator in LLaMA-2-7B on the RTX 4090 GPU. The batch size is 16 in all setups.}
\label{fig:breakdown}
\end{figure}

We performed a runtime breakdown of STAR-KV at 75\% compression with and without quantization on LLaMA-2-7B, comparing it with PyTorch SDPA at context lengths of 8K, 16K, and 32K with batch size 16, as shown in \hyperref[fig:breakdown]{Fig.~\ref*{fig:breakdown}}. PyTorch SDPA runs out of memory at 32K, whereas both STAR-KV variants remain executable. At 8K and 16K, STAR-KV reduces KV-cache-related memory-access time by 3.2$\times$ and 5.9$\times$, respectively, and reduces attention-related computation by 1.9$\times$ and 2.4$\times$, resulting in overall speedups of 2.6$\times$ and 3.3$\times$. Adding 4-bit quantization further reduces memory-access time by 4.0$\times$ and 8.6$\times$, but achieves smaller compute-side gains of 1.1$\times$ and 1.6$\times$ due to quantization overhead, yielding overall speedups of 2.2$\times$ and 3.5$\times$. At 32K, STAR-KV without quantization achieves a 4.1$\times$ speedup over the estimated baseline, while the variant with quantization reaches 4.9$\times$, highlighting the increasing benefit of quantization at longer contexts. These results reveal a clear bottleneck shift: as context length grows, reducing KV-cache memory traffic becomes the dominant source of acceleration, while the relative impact of quantization overhead diminishes.

\subsubsection{Speedup of Attention on RTX Pro 6000 GPU}
\label{sec:appendix-speedup}

\begin{figure}[h]
\centering
\includegraphics[width=0.45\textwidth]{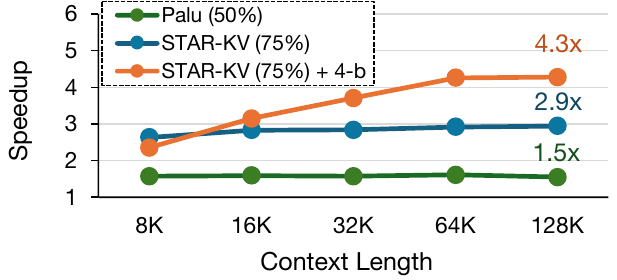}
\caption{
Normalized speedup of the attention module over the PyTorch SDPA FP16 implementation in RTX 6000 Pro GPU. The batch size is 16 in all setups.}
\label{fig:speedup2}
\end{figure}

To show the portability of our kernels, we extend the experiments to an RTX Pro 6000 GPU with 96 GB memory capacity.
\hyperref[fig:speedup2]{Fig.~\ref*{fig:speedup2}} illustrates the speedup of the attention operator in LLaMA-2-7B, using a batch size 16.
Here, we limit the context length and batch size to a value that fits inside the GPU memory with the PyTorch SDPA baseline.
At a context length of 128K, \sys~achieves a 2.9$\times$ speedup, and adding 4-bit KV quantization increases the speedup to 4.3$\times$.

\subsection{Implementation Optimizations}
\label{sec:appendix-sys_opt}

\textbf{Operation Fusion.}
RoPE is often implemented as bandwidth-bound element-wise operations on GPU; therefore, it is common practice to store keys after RoPE in the KV cache to avoid reapplying it repeatedly on the fly.
In our setting, the latent keys are cached in a pre-RoPE form because RoPE can only be applied after reconstructing the latent keys using the decomposed key projections.
Therefore, naively applying RoPE to all cached keys at each step would be expensive.
Moreover, when the KV cache is quantized, cached keys and values require dequantization before the necessary matrix multiplications. A naive dequantization approach expands low-bit data to higher-precision formats early, increasing memory traffic and incurring additional bandwidth overhead.
To address these challenges, we implement custom Triton kernels that fuse (i) key dequantization along with RoPE application, key reconstruction, and attention-score computations $q \cdot K$, and (ii) value dequantization with the reordered attention output computation ($p \cdot V'$). This fusion reduces intermediate materialization and redundant memory reads/writes, improving inference efficiency.

\noindent \textbf{Attention with GEMM.}
Another system-level inefficiency stems from the prevalence of General Matrix--Vector multiplications (GEMV) during decoding, which typically underutilize GPU compute resource compared to General Matrix--Matrix multiplication (GEMM) due to lower arithmetic intensity (Operations/Byte)~\cite{dao2023flashattention}. During decoding, attention forms a single query for the current token against keys/values for the entire context length, leading to multiple GEMV operations across heads.
In \sys, the score path $(q\cdot K)$ is computed within our fused Triton kernel together with key reconstruction, RoPE, and dequantization, avoiding separate GEMV launches. For the value path, using the attention output reordering in \hyperref[sec:reordering]{Section~\ref*{sec:reordering}}, we stack per-head attention probabilities $p^i\in\mathbb{R}^{l}$ into $p\in\mathbb{R}^{h\times l}$ and apply them jointly to the shared compressed values $V'$. This transforms $h$ independent GEMV-style operations into a single GEMM, improving throughput by increasing arithmetic intensity and GPU utilization.


\end{document}